\documentclass[sigconf,nonacm]{acmart}

\AtBeginDocument{%
  }

\usepackage{multirow}   % 刚才的 Table 2 用到了跨行合并
\usepackage{booktabs}   % 三线表支持 (toprule, midrule, bottomrule) - acmart通常已内置，但显式调用无害
\usepackage{graphicx}   % 图片插入及 resizebox 缩放表格 - acmart通常已内置

\usepackage{algorithm}   % 提供 algorithm 浮动体
\usepackage{algpseudocode}% 提供算法语句（推荐！比 algorithmic 更简洁）
\newcommand{\inc}[1]{\scriptsize{\textcolor{teal}{(+#1)}}}
\newcommand{\dec}[1]{\scriptsize{\textcolor{red}{(-#1)}}}
\usepackage{xcolor} 
\renewcommand\footnotetextcopyrightpermission[1]{}

\renewcommand\footnotetextcopyrightpermission[1]{}

\begin{document}

%%
%% The "title" command has an optional parameter,
%% allowing the author to define a "short title" to be used in page headers.
\title{Efficient Data Selection for Multimodal Models \\ via Incremental Optimization Utility}

\author{Jinhao Jing}
\affiliation{%
  \institution{ }
  \country{ }}
\email{kawakamitomie56@gmail.com}

\author{Qiannian Zhao\footnotemark[1]}
\affiliation{%
  \institution{ }
  \country{ }}
\email{zqn200285@gmail.com}

\author{Chao Huang}
\affiliation{%
  \institution{ }
  \country{ }}
\email{huangchao200107@gmail.com}

\author{Zhan Su\footnotemark[2]}
\affiliation{%
  \institution{ }
  \country{ }}
\email{zhan.su@umontreal.ca}

%% 脚注文本：放在\maketitle之后，自动出现在第一页左下角（和参考论文完全一致）

%%
%% The "author" command and its associated commands are used to define
%% the authors and their affiliations.
%% Of note is the shared affiliation of the first two authors, and the
%% "authornote" and "authornotemark" commands
%% used to denote shared contribution to the research.

%%
%% By default, the full list of authors will be used in the page
%% headers. Often, this list is too long, and will overlap
%% other information printed in the page headers. This command allows
%% the author to define a more concise list
%% of authors' names for this purpose.
% \renewcommand{\shortauthors}{Trovato et al.}

%%
%% The abstract is a short summary of the work to be presented in the
%% article.

\begin{abstract}

The scaling of Large Multimodal Models (LMMs) is constrained by the quality-quantity trade-off inherent in synthetic data. Previous approaches, such as \textit{LLM-as-a-Judge}, have proven their effectiveness in addressing this but suffer from prohibitive computational costs and lack of interpretability. To bridge this gap, we propose \textbf{One-Step-Train (OST)}, a framework that reformulates data selection as an \textit{incremental optimization utility} ranking problem. Instead of relying on semantic heuristics, OST estimates the marginal utility of each sample via a simulated single-step update on a lightweight proxy. Experiments on the Qwen series across multimodal mathematical reasoning benchmarks demonstrate that OST achieves Pareto-optimal efficiency. By selecting the top-\textbf{50\%} subset, OST reduces training costs by \textbf{43\%} (and total time consumption by \textbf{17\%}) while surpassing the strong \textit{LLM-as-a-Judge} baseline by \textbf{1.8} points. Furthermore, under a fixed compute budget, our method using only the top-\textbf{20\%} subset achieves a \textbf{5.6} point gain over \textit{LLM-as-a-Judge}, improves upon heuristic scoring baselines like \textit{DEITA}, and outperforms the Full-SFT baseline by \textbf{8.8} points. Notably, while Full-SFT suffers from performance degradation due to noise, our optimization-grounded approach effectively identifies toxic samples, successfully reversing the negative transfer frequently observed in complex reasoning tasks.

\end{abstract}

%%
%% The code below is generated by the tool at http://dl.acm.org/ccs.cfm.
%% Please copy and paste the code instead of the example below.
%%
\begin{CCSXML}
<ccs2012>
   <concept>
       <concept_id>10010147.10010178</concept_id>
       <concept_desc>Computing methodologies~Artificial intelligence</concept_desc>
       <concept_significance>500</concept_significance>
       </concept>
   <concept>
       <concept_id>10010147.10010257.10010258.10010259</concept_id>
       <concept_desc>Computing methodologies~Supervised learning</concept_desc>
       <concept_significance>500</concept_significance>
       </concept>
   <concept>
       <concept_id>10010147.10010257.10010258.10010259.10003268</concept_id>
       <concept_desc>Computing methodologies~Ranking</concept_desc>
       <concept_significance>300</concept_significance>
       </concept>
   <concept>
       <concept_id>10010147.10010178.10010179</concept_id>
       <concept_desc>Computing methodologies~Natural language processing</concept_desc>
       <concept_significance>100</concept_significance>
       </concept>
 </ccs2012>
\end{CCSXML}

\ccsdesc[500]{Computing methodologies~Artificial intelligence}
\ccsdesc[500]{Computing methodologies~Supervised learning}
\ccsdesc[300]{Computing methodologies~Ranking}
\ccsdesc[100]{Computing methodologies~Natural language processing}

%
% Keywords. The author(s) should pick words that accurately describe
% the work being presented. Separate the keywords with commas.
\keywords{Large Multimodal Models, Multimodal Learning, Data-efficient Training, Data Selection, Data Influence, Data-Centric AI}
% A "teaser" image appears between the author and affiliation
% information and the body of the document, and typically spans the
% page.

%%
%% This command processes the author and affiliation and title
%% information and builds the first part of the formatted document.
\maketitle

\footnotetext[1]{Contributing equally with the first author.}
\footnotetext[2]{Corresponding author.}

\begin{figure*}[t!]
    \centering
    % Ensure your file path is correct. I kept the original placeholder name.
    \includegraphics[width=1.0\textwidth]{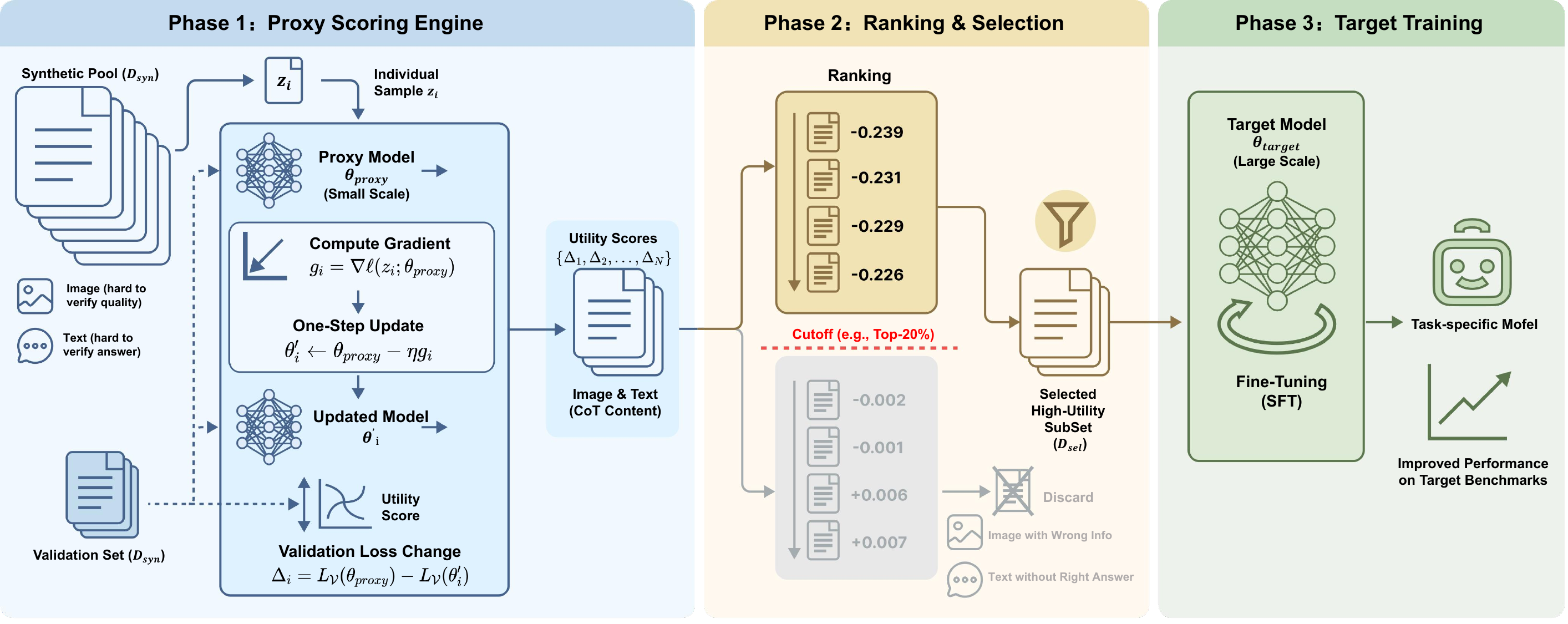}
    \vspace{-1.5em}
    \caption{\textbf{Overview of the One-Step-Train (OST) Pipeline.} The process reformulates selection into three efficient phases: (1) \textbf{Proxy Scoring:} A lightweight proxy model estimates the marginal utility ($\Delta_i$) of each sample by measuring validation loss reduction after a simulated single gradient update. (2) \textbf{Ranking \& Selection:} Samples are ordered by their update utility, filtering a high-value top-$p$\% subset. (3) \textbf{Target Training:} The large target model is fine-tuned exclusively on this subset, effectively reversing negative transfer.}
    \label{fig:ost_pipeline}
\end{figure*}

\section{Introduction}

Scaling laws for large multimodal models (LMMs) show that performance on complex tasks improves steadily as more data is used, as long as the basic quality of the data is maintained \citep{brown2020languagemodelsfewshotlearners, kaplan2020scalinglawsneurallanguage}. This finding has created a strong and growing demand for large-scale training data. However, the collection of high-quality human annotations has not kept pace with the rapid growth of model parameters \cite{wang2024surveydatasynthesisaugmentation, villalobos2024rundatalimitsllm}. As a result, synthetic data has become essential for filling this gap. Synthetic data now serves as a cornerstone across diverse applications, from domain-specific reasoning (e.g., mathematics, coding) to chain-of-thought (CoT) augmentation \cite{lu2025machinelearningsyntheticdata, tong2024dartmathdifficultyawarerejectiontuning, zhou2024jiuzhang30efficientlyimprovingmathematical}, offering unparalleled advantages in scalability and cost-efficiency \cite{chang2024surveydatasynthesisapproaches}.

However, many samples in synthetic datasets are low quality or contain hallucinations, especially when ground truth (GT) is missing or cannot be verified \cite{vanbreugel2023syntheticdatarealerrors, bauer2024comprehensiveexplorationsyntheticdata}. To balance data quality and quantity, the \textit{LLM-as-a-Judge} framework has become a common industry approach. It uses strong models as pseudo-oracles to filter generated data \cite{zheng2023judgingllmasajudgemtbenchchatbot, gu2025surveyllmasajudge}. However, this approach has clear limits. It requires high computational cost due to multi-turn generation, offers little interpretability in its decisions, and risks preference leakage, where the judge model amplifies the biases of the generator \citep{li2025generationjudgmentopportunitieschallenges, li2025preferenceleakagecontaminationproblem}.

Recent work on data efficiency challenges the idea that \textit{more data is always better}. It shows that a small amount of high-quality data can match or even outperform training on full datasets. For example, LIMA \citep{zhou2023limaalignment} shows that model alignment depends more on data quality than on data size. This insight brings renewed attention to data selection, a classic machine learning problem that aims to find a small but highly useful subset of data \cite{hart1968condensed}. However, many traditional selection methods do not scale well to modern LLMs or fail to capture the complexity of reasoning tasks.

To bridge the gap between expensive semantic verification and scalable selection, we propose \textbf{One-Step-Train (OST)}. This framework reformulates selection as an \textit{incremental training utility} ranking problem, estimating the marginal value of each sample via a simulated single-step update on a lightweight proxy. By grounding data quality in the actual optimization signal, OST effectively bypasses the prohibitive costs of retraining-based influence functions while maintaining high alignment with target model performance. Our main contributions are summarized as follows:

\begin{itemize}
    \item We establish OST, an efficient data selection framework that significantly reduces total computational overhead (measured in GPU hours) by 17\%---including selection costs---while achieving a 43\% reduction specifically in the training phase. Simultaneously, it improves model capability, improving upon heuristic scoring baselines like DEITA \citep{liu2024makesgooddataalignment} and surpassing LLM-as-a-Judge baseline by 1.8 points.
    
    \item We find that the correlation between proxy model convergence and selection quality is non-monotonic. Our analysis shows that using a checkpoint with only 5\% warm-up achieves the best trade-off between adapting to the target domain and resisting synthetic noise. In contrast, fully converged proxy models tend to overfit spurious patterns and hallucinations in synthetic data.
    
    \item  Validated on the Qwen3-VL series, OST effectively identifies and purges toxic samples that degrade reasoning logic. We demonstrate a substantial \textit{Less is More} phenomenon where a high-utility Top-20\% subset notably outperforms the uncurated Full-SFT baseline by \textbf{8.8} points, effectively reversing the negative transfer \cite{wang2019characterizingavoidingnegativetransfer} observed in complex benchmarks.
\end{itemize}

\section{Related Work}
\label{sec:Related Work}

We situate our work within two primary research streams: synthetic data evaluation via LLM-as-a-Judge, and efficient data selection grounded in data influence.

\paragraph{Synthetic Data Generation and Verification.}
Synthetic data has emerged as a cornerstone for instruction tuning and reasoning alignment, with recent works demonstrating its efficacy in domain-specific augmentation \citep{lu2025machinelearningsyntheticdata, zhou2024jiuzhang30efficientlyimprovingmathematical, tong2024dartmathdifficultyawarerejectiontuning}. However, to mitigate inherent hallucinations \citep{vanbreugel2023syntheticdatarealerrors} and prevent model collapse \citep{bauer2024comprehensiveexplorationsyntheticdata, shumailov2024curserecursiontraininggenerated}, rigorous quality control is essential. The dominant verification paradigm, \textit{LLM-as-a-Judge}, employs strong models as pseudo-oracles to filter generated samples \citep{zheng2023judgingllmasajudgemtbenchchatbot, gu2025surveyllmasajudge}. While effective, this generative approach is constrained by prohibitive inference costs and the risk of preference leakage \citep{li2025preferenceleakagecontaminationproblem, li2025generationjudgmentopportunitieschallenges}, motivating the shift towards more efficient, optimization-grounded selection metrics.

\paragraph{Data Selection and Influence Functions.}
Data selection aims to identify a training subset (coreset) that approximates or exceeds the performance of the full dataset \citep{mirzasoleiman2020coresetsdataefficienttrainingmachine}. This is often formalized through Influence Functions (IF), which quantify the counterfactual impact of a training point on test loss \citep{koh2020understandingblackboxpredictionsinfluence}. Recent studies affirm that high-quality subsets can drive strong performance in instruction tuning, challenging the necessity of massive datasets \citep{zhou2023limaalignment, chen2023maybe05dataneeded, chen2024alpagasustrainingbetteralpaca}.

Current selection paradigms can be categorized by computational overhead \citep{yin2025computeconstraineddataselection}: 1) \textbf{Heuristic \& Embedding Methods:} Low-cost approaches rely on surface features like perplexity (PPL), response length, or embedding diversity \citep{wettig2024quratingselectinghighqualitydata, zhao2024longalignmentsimpletoughtobeat, liu2024makesgooddataalignment}. While efficient, these metrics are often loosely correlated with downstream reasoning capability. 2) \textbf{Loss-Driven Proxies:} Methods like Montessori-Instruct \citep{li2024montessoriinstructgenerateinfluentialtraining} and S2L \citep{yang2024smalltolarges2lscalabledata} utilize small-model loss trajectories to estimate data utility. Similarly, PreSelect trains lightweight classifiers to predict loss consistency across models \citep{shum2025predictivedataselectiondata}. 3) \textbf{Gradient-Based Methods:} These offer the highest theoretical fidelity by directly linking data to optimization signals. Notable examples include LESS \citep{xia2024lessselectinginfluentialdata}, which projects gradients into a low-rank space for selection, and recent extensions to multimodal settings \citep{liu2024morehighvaluedataselection}.

\section{Theoretical Foundations}
\label{sec:preliminaries}

We formulate synthetic data filtering as a \emph{ranking} problem, assigning a scalar utility to each sample $z_i$ to maximize validation performance. Theoretically, Leave-One-Out (LOO) retraining serves as the ground-truth oracle for measuring exact counterfactual data influence, as it directly quantifies the change in validation loss if a specific sample is removed from the training set. However, calculating exact LOO is computationally prohibitive, requiring $N$ complete retrainings for a dataset of size $N$. Influence Functions (IF) provide an approximation but suffer from Hessian instability in deep non-convex models \citep{basu2021influencefunctionsdeeplearning}. To address these limitations, we introduce \textbf{One-Step-Train (OST)}, an \emph{incremental} proxy that estimates utility by calculating a single step update from a fixed checkpoint $\boldsymbol{\theta}$. As shown in Appendix~\ref{app:appendix_proofs}, this formulation bypasses explicit curvature computation, reducing to a scalable gradient inner-product ranking.

\subsection{LOO Data Influence}
\label{sec:loo_if}

The LOO influence measures the counterfactual validation loss change if a sample $z_i$ is removed from the training set $\mathcal{D}$ \citep{cook1977detection, 10.1093/oso/9780198522669.003.0009}. Since calculating exact LOO requires $|\mathcal{D}|$ retrainings, Influence Functions (IF) approximate this via a second-order Taylor expansion \citep{koh2020understandingblackboxpredictionsinfluence}:
\begin{equation}
I_{\mathrm{IF}}(z_i) = - \nabla L_{\mathcal{V}}(\hat{\boldsymbol{\theta}})^\top \mathbf{H}_{\hat{\boldsymbol{\theta}}}^{-1} \nabla \ell(z_i; \hat{\boldsymbol{\theta}}),
\label{eq:if}
\end{equation}
where $I_{\mathrm{IF}}(z_i)$ represents the estimated influence score of sample $z_i$. In this formulation, $\nabla L_{\mathcal{V}}(\hat{\boldsymbol{\theta}})$ denotes the gradient of the loss on the validation anchor set, representing the direction of optimal model improvement; $\nabla \ell(z_i; \hat{\boldsymbol{\theta}})$ is the training gradient for sample $z_i$, which signifies the specific update direction that the sample provides; and $\mathbf{H}_{\hat{\boldsymbol{\theta}}}^{-1}$ is the inverse Hessian matrix, which adjusts the alignment by accounting for the local curvature of the loss landscape. While IF correlates strongly with LOO in convex settings (Figure~\ref{fig:mnist_sanity_triptych}(a)), the inverse Hessian is prohibitively expensive and numerically unstable for large-scale LLMs, necessitating a Hessian-free approach.

% [Figure 2 placeholder: Sanity Check Linear Regression]
\begin{figure*}[t]
  \centering
  \begin{minipage}[t]{0.317\linewidth}
    \centering
    \includegraphics[width=\linewidth]{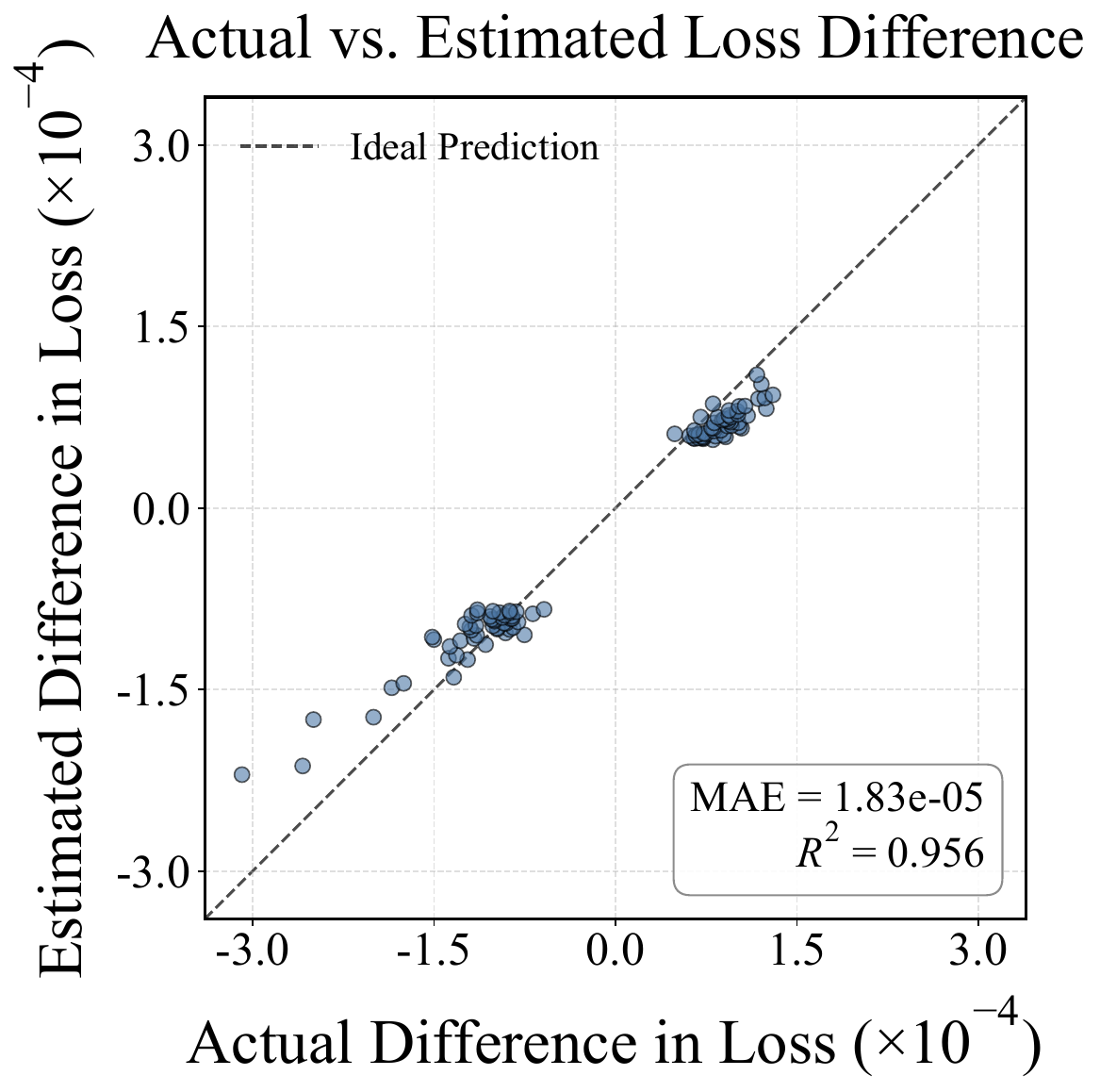}
    \vspace{-0.3em}
    \caption*{(a) IF vs. LOO (scatter)}
  \end{minipage}\hfill
  \begin{minipage}[t]{0.33\linewidth}
    \centering
    \includegraphics[width=\linewidth]{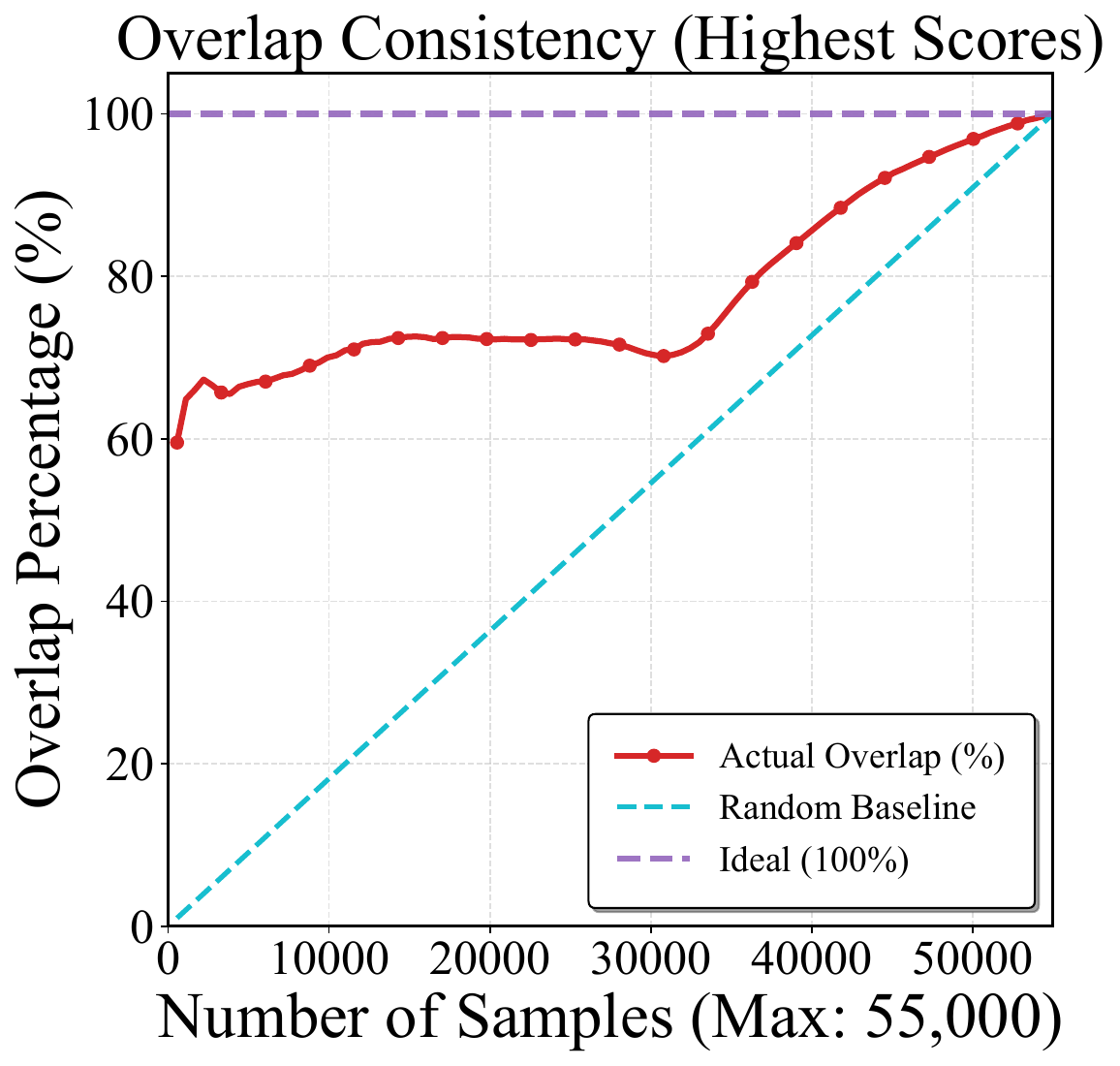}
    \vspace{-0.3em}
    \caption*{(b) Top-$p$\% overlap}
  \end{minipage}\hfill
  \begin{minipage}[t]{0.33\linewidth}
    \centering
    \includegraphics[width=\linewidth]{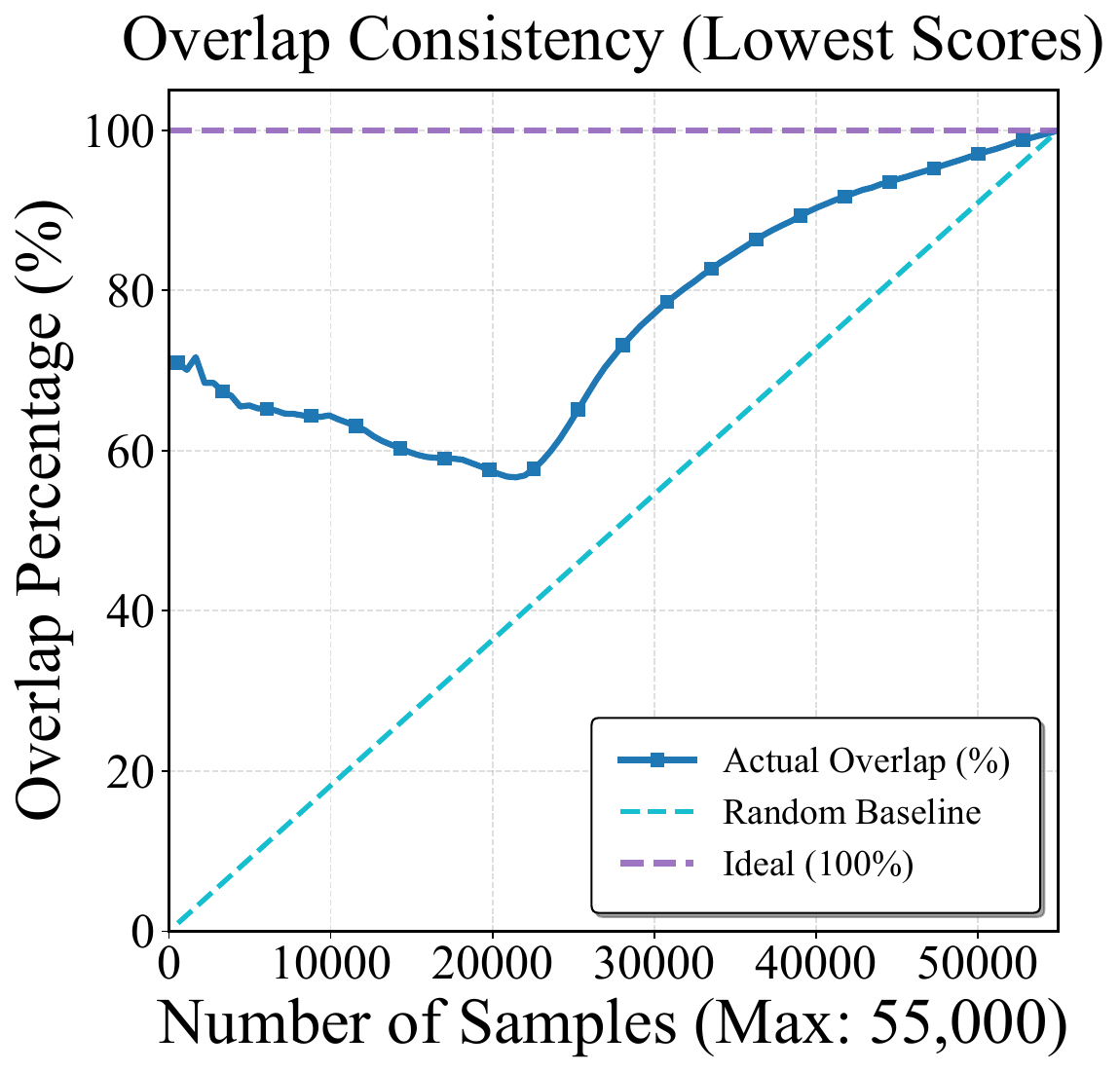}
    \vspace{-0.3em}
    \caption*{(c) Bottom-$p$\% overlap}
  \end{minipage}
    \vspace{-0.3em}
    \caption{Sanity checks on 10-class MNIST (55K examples) using a linear regression model. 
    \textbf{(a) IF vs. LOO:} Comparison between Influence Function (IF) scores and exact Leave-One-Out (LOO) loss changes for the top-200 influential examples. The strong linear alignment validates IF as a reliable proxy for LOO in this convex setting. 
    \textbf{(b)--(c) Ranking Agreement:} Alignment between our \textbf{One-Step-Train} utility and IF. To establish a rigorous evaluation chain, we first validate IF as a computationally tractable proxy for the prohibitively expensive exact LOO in (a), and subsequently utilize IF as the ground truth to benchmark our OST method. The observed overlap for top-$p$\% and bottom-$p$\% subsets significantly exceeds the random baseline (dashed line). Note that as the selected subset size reaches the total dataset volume (55,000), any two ranking methods will naturally select the entire pool, causing the overlap to predictably converge to 100\%.}
    \label{fig:mnist_sanity_triptych}
\end{figure*}

\subsection{Incremental Data Influence}
\label{sec:incremental_influence}

While LOO is fundamentally \textit{retrospective}---measuring how the model degrades if existing data is removed---the training of modern LMMs is inherently \textit{staged} and \textit{incremental}. Following pre-training, models undergo Supervised Fine-Tuning (SFT) to adapt to specific domains or instruction formats \citep{gururangan2020dontstoppretrainingadapt, wei2022finetunedlanguagemodelszeroshot, ouyang2022traininglanguagemodelsfollow}. This paradigm shift necessitates a \textit{prospective} view of data influence: rather than assessing the cost of removal, we focus on the \textbf{marginal utility of a sample in advancing the model from its current state toward the target distribution}. Specifically, we ask: \textit{To what extent does a single optimization step on this sample align the model's trajectory with our validation objective?}

\paragraph{One-Step-Train Utility.}
Starting from a fixed checkpoint $\boldsymbol{\theta}$, we quantify the \textbf{marginal utility} of taking a single optimization step on a specific sample $z_i$. Let $\boldsymbol{g}_{\mathcal{V}}(\boldsymbol{\theta}) \triangleq \nabla L_{\mathcal{V}}(\boldsymbol{\theta})$ denote the gradient on the validation set, and $\boldsymbol{g}_i(\boldsymbol{\theta}) \triangleq \nabla \ell(z_i;\boldsymbol{\theta})$ represent the gradient for sample $z_i$. We simulate a one-step parameter update $\boldsymbol{\theta}_i^{+} = \boldsymbol{\theta} - \eta \boldsymbol{g}_i(\boldsymbol{\theta})$. The resulting \textbf{One-Step-Train (OST) utility}, denoted as $\Delta_i(\boldsymbol{\theta})$, is formally defined as the reduction in validation loss:
\begin{equation}
\Delta_i(\boldsymbol{\theta}) \triangleq L_{\mathcal{V}}(\boldsymbol{\theta}) - L_{\mathcal{V}}(\boldsymbol{\theta}_i^{+}),
\label{eq:delta_def}
\end{equation}
where the symbol $\triangleq$ signifies a formal definition. In this formulation, $\mathcal{V}$ refers to the \textbf{validation anchor set}, a representative subset strictly aligned with the target task distribution to provide the optimization direction. A positive utility score ($\Delta_i > 0$) implies that sample $z_i$ provides a gradient direction beneficial for the target task, directly contributing to model improvement on the anchor set. Conversely, samples with negative utility ($\Delta_i < 0$) are identified as toxic samples that cause negative transfer and must be purged to maintain reasoning logic.

\paragraph{Ranking Consistency (Hessian-Free).}
Although Eq.~\ref{eq:delta_def} is intuitive, it requires expensive forward passes on $\mathcal{V}$ for each candidate. However, assuming the validation loss is $\beta$-smooth, we can apply a first-order Taylor expansion where the remainder is strictly bounded by $\frac{\beta \eta^2}{2} \|\boldsymbol{g}_i(\boldsymbol{\theta})\|^2$. Consequently, the utility is dominated by the linear alignment term:
\begin{equation}
\Delta_i(\boldsymbol{\theta}) \approx \eta \cdot \boldsymbol{g}_{\mathcal{V}}(\boldsymbol{\theta})^\top \boldsymbol{g}_i(\boldsymbol{\theta}).
\end{equation}
This leads to our core ranking theorem:

\begin{theorem}[Ranking Consistency]
\label{thm:ranking_consistency}
Let $s_i(\boldsymbol{\theta}) = \boldsymbol{g}_{\mathcal{V}}(\boldsymbol{\theta})^\top \boldsymbol{g}_i(\boldsymbol{\theta})$ be the gradient inner-product score. If the step size $\eta$ is sufficiently small to satisfy $\eta \le 2\gamma / \beta(\|\boldsymbol{g}_i\|^2 + \|\boldsymbol{g}_j\|^2)$ for a given score margin $\gamma$, the utility ranking perfectly preserves the gradient alignment order:
\begin{equation}
\Delta_i(\boldsymbol{\theta}) > \Delta_j(\boldsymbol{\theta}) \iff s_i(\boldsymbol{\theta}) > s_j(\boldsymbol{\theta}).
\end{equation}
\end{theorem}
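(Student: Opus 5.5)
The plan is to make the first-order Taylor argument quantitative and compare the two remainders against the score gap. The statement only makes sense if the ``score margin $\gamma$'' means that the two scores are separated, so I will read the hypothesis as $|s_i(\boldsymbol{\theta})-s_j(\boldsymbol{\theta})|>\gamma$ with $\gamma>0$. Without such a separation the equivalence can fail: if $s_i=s_j$, the remainders alone decide the order of $\Delta_i$ and $\Delta_j$. I will say explicitly that this reading is an assumption I am adding to make the theorem true, together with $\beta$-smoothness of $L_{\mathcal{V}}$ as assumed in the paragraph preceding the theorem.

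\textbf{Step 1 (remainder bound).} By $\beta$-smoothness, for any $\boldsymbol{\theta}'$ we have $|L_{\mathcal{V}}(\boldsymbol{\theta}') - L_{\mathcal{V}}(\boldsymbol{\theta}) - \boldsymbol{g}_{\mathcal{V}}(\boldsymbol{\theta})^\top(\boldsymbol{\theta}'-\boldsymbol{\theta})| \le \frac{\beta}{2}\|\boldsymbol{\theta}'-\boldsymbol{\theta}\|^2$. Plugging in $\boldsymbol{\theta}'=\boldsymbol{\theta}_i^{+}=\boldsymbol{\theta}-\eta\boldsymbol{g}_i$ gives
\begin{equation*}
\Delta_i(\boldsymbol{\theta}) = \eta\, s_i(\boldsymbol{\theta}) - R_i, \qquad |R_i| \le \tfrac{\beta\eta^2}{2}\|\boldsymbol{g}_i\|^2 ,
\end{equation*}
and the same holds for $j$.

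\textbf{Step 2 (difference).} Subtracting the two expansions gives $\Delta_i-\Delta_j = \eta(s_i-s_j) - (R_i-R_j)$. The triangle inequality bounds the error by
\begin{equation*}
|R_i-R_j| \le \tfrac{\beta\eta^2}{2}\bigl(\|\boldsymbol{g}_i\|^2+\|\boldsymbol{g}_j\|^2\bigr).
\end{equation*}
The step-size condition $\eta \le 2\gamma/\beta(\|\boldsymbol{g}_i\|^2+\|\boldsymbol{g}_j\|^2)$ is exactly what turns this into $|R_i-R_j|\le \eta\gamma$.

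\textbf{Step 3 (both directions).} If $s_i>s_j$, the margin gives $s_i-s_j>\gamma$. Then $\Delta_i-\Delta_j > \eta\gamma - \eta\gamma = 0$, which is the backward implication. For the forward implication, argue by contraposition: if $s_i\le s_j$, separation forces $s_j - s_i > \gamma$, and the symmetric computation gives $\Delta_j>\Delta_i$, so $\Delta_i>\Delta_j$ fails. Together these give the stated equivalence.

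\textbf{Main obstacle.} The only real obstacle is interpretive rather than technical: pinning down the margin hypothesis so that strict inequalities survive. With $|s_i-s_j|\ge\gamma$ and equality in the step-size bound, the argument only yields $\Delta_i\ge\Delta_j$. So I will either require a strict margin (as above) or a strict step-size inequality. I will also remark that for the ranking of all samples to be preserved, the condition must hold for every pair, which amounts to using $\max_i\|\boldsymbol{g}_i\|^2$ and the minimal pairwise gap.
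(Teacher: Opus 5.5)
Your proposal is correct and follows essentially the same route as the paper. Both arguments rest on a remainder bound $|r_i|\le\frac{\beta\eta^2}{2}\|\boldsymbol{g}_i\|^2$; the paper derives it from the Lagrange form with $\|\nabla^2 L_{\mathcal{V}}\|_2\le\beta$, while you use the first-order smoothness inequality, which avoids assuming twice differentiability. Both then subtract the two expansions, apply the triangle inequality, and invoke the step-size condition.

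Your strictness caveat is needed. The paper's appendix assumes only $|s_i-s_j|\ge\gamma$ together with the non-strict step-size bound, and that yields only $\Delta_i\ge\Delta_j$. Equality is actually attained: take a quadratic $L_{\mathcal{V}}$ with Hessian $\mathrm{diag}(\beta,-\beta)$, $\boldsymbol{g}_i=\boldsymbol{e}_1$, $\boldsymbol{g}_j=\boldsymbol{e}_2$, $s_i-s_j=\gamma$, and $\eta$ at the boundary of the bound. Your contrapositive also spells out the converse direction, which the paper dispatches with ``without loss of generality.''
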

We empirically validate this alignment in Figure~\ref{fig:mnist_sanity_triptych}(b)--(c), where OST rankings exhibit significant overlap with the computationally expensive IF rankings, confirming that we can select high-utility data simply by computing dot products, effectively bypassing the Hessian inverse (More details are shown in Appendix~\ref{app:appendix_proofs}).

\subsection{Theoretical Stability and Transferability}
\label{sec:stability}

We briefly address the validity of these instantaneous scores over practical training windows and across model sizes.

\paragraph{Stability over Training.}
Since $\Delta_i$ is local, \textit{does the ranking hold as parameters evolve?} Assuming Lipschitz continuous gradients ($L_{\mathcal{V}}, L_i$) and a bounded parameter drift $\|\boldsymbol{\theta}_t - \boldsymbol{\theta}_0\| \le R$ typical in fine-tuning, the ranking remains mathematically stable (More details are shown in Appendix~\ref{app:proof_theorem_stability}).

\begin{theorem}[Ranking Stability]
If the initial score margin $M_0 = s_{i,0} - s_{j,0}$ dominates the drift bound (i.e., $M_0 > 2[L_{\mathcal{V}}(B_i+B_j)+(L_i+L_j)B_{\mathcal{V}}]R$), the utility ordering at initialization ($t=0$) is strictly preserved at step $t$:
\begin{equation}
s_{i,0} > s_{j,0} \implies s_{i,t} > s_{j,t}.
\end{equation}
\end{theorem}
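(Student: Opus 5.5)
The plan is to control the drift of each score $s_{k,t}=\boldsymbol{g}_{\mathcal{V}}(\boldsymbol{\theta}_t)^\top\boldsymbol{g}_k(\boldsymbol{\theta}_t)$, $k\in\{i,j\}$, separately, and then show that the margin $M_0$ cannot be erased. We interpret the constants as follows: $L_{\mathcal{V}},L_i,L_j$ are the Lipschitz constants of the gradient maps $\boldsymbol{\theta}\mapsto\boldsymbol{g}_{\mathcal{V}}(\boldsymbol{\theta})$, $\boldsymbol{g}_i(\boldsymbol{\theta})$, $\boldsymbol{g}_j(\boldsymbol{\theta})$. The quantities $B_{\mathcal{V}},B_i,B_j$ are uniform bounds on the corresponding gradient norms over the ball $\{\boldsymbol{\theta}:\|\boldsymbol{\theta}-\boldsymbol{\theta}_0\|\le R\}$. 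Since the drift assumption keeps $\boldsymbol{\theta}_t$ in this ball, all bounds apply along the trajectory.

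The key step is a product-rule (add-and-subtract) decomposition:
\begin{equation*}
s_{k,t}-s_{k,0} = \big(\boldsymbol{g}_{\mathcal{V}}(\boldsymbol{\theta}_t)-\boldsymbol{g}_{\mathcal{V}}(\boldsymbol{\theta}_0)\big)^\top \boldsymbol{g}_k(\boldsymbol{\theta}_t) + \boldsymbol{g}_{\mathcal{V}}(\boldsymbol{\theta}_0)^\top\big(\boldsymbol{g}_k(\boldsymbol{\theta}_t)-\boldsymbol{g}_k(\boldsymbol{\theta}_0)\big).
\end{equation*}
Apply Cauchy--Schwarz to each term, then Lipschitz continuity together with $\|\boldsymbol{\theta}_t-\boldsymbol{\theta}_0\|\le R$. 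This gives
\begin{equation*}
|s_{k,t}-s_{k,0}|\le (L_{\mathcal{V}}B_k + L_k B_{\mathcal{V}})R .
\end{equation*}

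Next, write $s_{i,t}-s_{j,t} = M_0 + (s_{i,t}-s_{i,0}) - (s_{j,t}-s_{j,0})$. This yields
\begin{equation*}
s_{i,t}-s_{j,t} \ge M_0 - \big[L_{\mathcal{V}}(B_i+B_j)+(L_i+L_j)B_{\mathcal{V}}\big]R .
\end{equation*}
The hypothesis assumes $M_0$ exceeds twice this bracket, so the difference is strictly positive (indeed larger than $M_0/2$). This proves $s_{i,t}>s_{j,t}$. The factor $2$ in the hypothesis thus leaves slack. It would also cover a cruder estimate that bounds both terms around $\boldsymbol{\theta}_0$ and $\boldsymbol{\theta}_t$ symmetrically, or that does not assume the norm bounds hold at both endpoints.

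The main obstacle is not the algebra but making the assumptions precise enough for the argument to be honest. The gradient-norm bounds $B$ must hold on the whole ball, not just at $\boldsymbol{\theta}_0$; otherwise the factor $\|\boldsymbol{g}_k(\boldsymbol{\theta}_t)\|$ is uncontrolled. If only $\|\boldsymbol{g}_k(\boldsymbol{\theta}_0)\|\le B_k$ is assumed, I would first bound $\|\boldsymbol{g}_k(\boldsymbol{\theta}_t)\|\le B_k+L_kR$. This adds a second-order term $L_{\mathcal{V}}L_kR^2$, which the factor $2$ absorbs when $R$ is small. Finally, if the theorem is read as preserving the utility ordering $\Delta$ rather than $s$, Theorem~\ref{thm:ranking_consistency} is applied at $\boldsymbol{\theta}_t$ to transfer the conclusion, provided $\eta$ satisfies its step-size condition with margin $\gamma=M_0/2$.
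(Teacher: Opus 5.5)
Your proof is correct and follows the paper's argument. The paper's Lemma~2 obtains $|s_{i,t}-s_{i,0}|\le (L_{\mathcal{V}}B_i+L_iB_{\mathcal{V}})R$ by the same add-and-subtract and Cauchy--Schwarz/Lipschitz step; the only difference is that it evaluates $\boldsymbol{g}_{\mathcal{V}}$ at $\boldsymbol{\theta}_t$ and $\boldsymbol{g}_i$ at $\boldsymbol{\theta}_0$, the mirror image of your split, and it then concludes $s_{i,t}-s_{j,t}\ge M_0-[L_{\mathcal{V}}(B_i+B_j)+(L_i+L_j)B_{\mathcal{V}}]R>0$ exactly as you do.
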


\paragraph{Cross-Model Alignment.}
To ensure scalability, we estimate utility using a lightweight proxy model. This strategy is theoretically grounded in the \textbf{Subspace Alignment Hypothesis}. Let $\boldsymbol{g}^{(M)}$ and $\boldsymbol{g}^{(m)}$ denote the gradients of the target and proxy models. Assuming they share a low-rank structure differing by an isometry $\mathbf{P}$ and scaling $\alpha$ (i.e., $\|\boldsymbol{g}^{(M)} - \alpha \mathbf{P} \boldsymbol{g}^{(m)}\| \le \epsilon$), the target utility decomposes into $s_i^{(M)} \approx \alpha^2 s_i^{(m)} + \mathcal{R}_{\text{noise}}$. Thus, the ranking order remains consistent across model scales provided the scaled proxy margin exceeds the alignment noise.
\section{Pipeline}
\label{sec:method}

We propose \textbf{One-Step-Train (OST)}, a pipeline that operationalizes the incremental influence framework into a scalable selection process. As detailed in Algorithm \ref{alg:onestep_selection}, OST identifies high-utility subsets through three streamlined phases.

% \subsection{The OST Pipeline}
% \label{sec:method_pipeline}

\paragraph{Phase 1: Utility Scoring.}
For every candidate sample $z_i$, we simulate a single optimization step on the proxy parameters $\theta$. We quantify the sample's value as the resulting reduction in validation loss, denoted as $\Delta_i$ (Eq. \ref{eq:delta_def}). 

\paragraph{Phase 2: Ranking and Filtering.}
We treat data quality as a ranking problem. The synthetic pool is sorted by utility scores $\{\Delta_i\}$, and a hard threshold (Top-$p\%$) is applied to isolate the high-value subset $\mathcal{D}_{\mathrm{sel}}$. This step rigorously filters out toxic samples ($\Delta_i < 0$, causing negative transfer) and ineffective samples ($\Delta_i \approx 0$).

\paragraph{Phase 3: Target Optimization.}
The large-scale target model $\theta_{\mathrm{target}}$ is fine-tuned exclusively on the curated subset $\mathcal{D}_{\mathrm{sel}}$. By training solely on samples with positive validation alignment, we maximize data efficiency while minimizing the risk of model collapse.

\begin{algorithm}[t]
\caption{One-Step-Train Data Selection}
\label{alg:onestep_selection}
\begin{algorithmic}[1]
\Require 
    Synthetic candidate pool $\mathcal{D}_{\mathrm{syn}} = \{z_i\}_{i=1}^N$; 
    Validation set $\mathcal{V}$ (Anchor); 
    Model parameters $\theta$ (or Proxy $\theta_{\mathrm{proxy}}$); 
    Selection ratio $p\%$; 
    Learning rate $\eta$.
\Ensure Selected high-utility subset $\mathcal{D}_{\mathrm{sel}}$.

\Statex \textbf{// Phase 1: Utility Scoring}
\State Compute baseline validation loss:
\State $L_{\text{base}} \leftarrow L_{\mathcal{V}}(\theta)$

\For{each sample $z_i \in \mathcal{D}_{\mathrm{syn}}$}
    \State \textbf{Step 1.1:} Compute gradient on sample $z_i$
    \State $g_i \leftarrow \nabla \ell(z_i; \theta)$ 
    \State \textbf{Step 1.2:} Simulate one-step update
    \State $\theta'_i \leftarrow \theta - \eta \cdot g_i$
    \State \textbf{Step 1.3:} Measure validation impact
    \State $L_{\text{new}} \leftarrow L_{\mathcal{V}}(\theta'_i)$
    \State $\Delta_i \leftarrow L_{\text{base}} - L_{\text{new}}$ 
\EndFor

\Statex \textbf{// Phase 2: Ranking and Selection}
\State Sort $\mathcal{D}_{\mathrm{syn}}$ based on scores $\{\Delta_i\}$ in order.
\State Determine cutoff threshold $K$.
\State $\mathcal{D}_{\mathrm{sel}} \leftarrow \operatorname{Top-K}(\mathcal{D}_{\mathrm{syn}}, \{\Delta_i\})$.

\Statex \textbf{// Phase 3: Downstream Training}
\State Initialize target model $\theta_{\mathrm{target}}$.
\State Train $\theta_{\mathrm{target}}$ on $\mathcal{D}_{\mathrm{sel}}$ (Standard SFT).
\State \Return $\mathcal{D}_{\mathrm{sel}}$
\end{algorithmic}
\end{algorithm}

% \subsection{Experimental Protocols}
% \label{sec:method_proxy}

% \paragraph{Evaluation Protocols.}
% We evaluate $\mathcal{D}_{\mathrm{sel}}$ under two regimes to disentangle data quality from compute budget:
% \begin{itemize}
%     \setlength\itemsep{0em}
%     \item \textbf{Protocol A (Iso-data):} Training steps scale linearly with subset size. This tests \textit{time efficiency} in constrained production environments.
%     \item \textbf{Protocol B (Iso-compute):} Total optimization tokens are fixed (increasing epochs for subsets). This isolates \textit{data density}, testing convergence behavior per unit of compute.
% \end{itemize}

\section{Experiments}

\subsection{Experimental Settings}
\label{sec:exp_setup}

\paragraph{Data Setup.}
We construct a training corpus comprising 351,157 multimodal mathematical problems sourced from real-world examinations. To enhance reasoning density, we augment this dataset by synthesizing Chain-of-Thought (CoT) rationales using \textbf{Doubao-Seed-1.6-thinking}, followed by a rigorous quality verification pipeline. Detailed synthesis protocols, including the inverse rendering technique and filtering hyperparameters, are provided in Appendix~\ref{sec:data_details}.

\paragraph{Anchor Set and Evaluation Benchmarks.}
The OST utility calculation relies on a representative validation anchor $\mathcal{V}$. We employ a stratified subset of 100 examples strictly aligned with the distribution of our held-out test set. For downstream evaluation, we assess performance on:
(1) An \textbf{Internal Benchmark} of 500 questions spanning four difficulty tiers (Statistics in Table~\ref{tab:data_distribution}); and
(2) Four open-source benchmarks: \textbf{MathVision}~\citep{wang2024measuring}, \textbf{MathVista}~\citep{lu2024mathvista}, \textbf{WeMath}~\citep{qiao2024we}, and \textbf{LogicVista}~\citep{xiao2024logicvistamultimodalllmlogical}, to test out-of-distribution generalization.

% \paragraph{Anchor Set and Evaluation Benchmarks.}
% The calculation of the OST utility $\Delta_i$ relies on a representative validation set $\mathcal{V}$ to act as a validation anchor. We compiled a proprietary pool of authentic mathematics examination problems spanning diverse difficulty levels. To ensure rigorous evaluation, this pool is partitioned into distinct subsets:

\begin{table}[h]
\centering
\small
\renewcommand{\arraystretch}{1.2}
\resizebox{1.0\linewidth}{!}{
\begin{tabular}{llc}
\toprule
\textbf{Category} & \textbf{Domain} & \textbf{Size} \\
\midrule
\multirow{3}{*}{\textit{Arithmetic \& Algebra}} & Calculation & 50 \\
 & Number Concepts & 50 \\
 & Word Problems & 50 \\
\midrule 
\multirow{2}{*}{\textit{Geometry \& Visual}} & Geometry & 50 \\
 & Visual \& Graphs & 50 \\
\midrule 
\multirow{4}{*}{\textit{K-12 Benchmarks}} 
 & Middle School Math Exam & 50 \\
 & High School Math Exam & 50 \\
 & Math Olympiad & 50 \\
\midrule 
\multirow{2}{*}{\textit{Higher Education}} & Graduate Exam (Math I) & 50 \\
 & Graduate Exam (Math III) & 50 \\
\midrule
\midrule
\textbf{Total} & \textbf{Internal Test Set} & \textbf{500} \\
\bottomrule
\end{tabular}
}
\vspace{1.0em}
\caption{\textbf{Statistics of the Internal Benchmark.} The dataset is categorized into four capability groups to evaluate performance across varying difficulty levels.}
\label{tab:data_distribution}
\end{table}

% \begin{itemize}
%     \item \textbf{Anchor Set ($\mathcal{V}$):} A subset of 100 examples selected via stratified sampling to strictly follow the same distribution as the internal test set. These samples are exclusively reserved for gradient estimation and are isolated from the training process to prevent data leakage.
    
%     \item \textbf{Internal Benchmark:} A held-out test set of 500 questions designed for fine-grained capability diagnosis across four difficulty tiers (see Table~\ref{tab:data_distribution}).
    
%     \item \textbf{External Benchmarks:} To further assess robust generalization on out-of-distribution tasks, we concurrently evaluate models on four widely recognized open-source benchmarks: MathVision~\citep{wang2024measuring}, MathVista~\citep{lu2024mathvista}, WeMath~\citep{qiao2024we}, and LogicVista~\citep{xiao2024logicvistamultimodalllmlogical}.
% \end{itemize}

% [Insert Table 2 Here]
\begin{table*}[t!]
\centering
\small

\vspace{1.0em}
% ==================== Part A: Overall Budget Impact ====================
\resizebox{0.95\textwidth}{!}{
\begin{tabular}{l|c|ccc|ccc}
\multicolumn{8}{c}{\textbf{(a) Overall Impact of Training Budget \& Epochs}} \\
\toprule
\textbf{Subset} & \textbf{Data Size} & \multicolumn{3}{c|}{\textbf{Protocol A: Proportional Steps}} & \multicolumn{3}{c}{\textbf{Protocol B: Fixed Compute}} \\
\textit{(Selection)} & \textit{(Count)} & \textit{Epochs} & \textit{Steps} & \textit{Avg. Score} & \textit{Epochs} & \textit{Steps} & \textit{Avg. Score} \\
\midrule
Base (Before SFT) & - & - & - & 59.0 & - & - & 59.0 \\
Full-SFT (100\%) & 351.2k & $\approx$1.0 & 2000 & 60.5 \scriptsize{\textcolor{red}{(-3.2)}} & $\approx$1.0 & 2000 & 60.5 \scriptsize{\textcolor{red}{(-3.2)}} \\
\textbf{LLM-as-a-Judge} & 285.2k & $\approx$1.0 & 1500 & \textbf{63.7} & $\approx$1.0 & 1500 & \textbf{63.7} \\
\midrule
Best-10\% & 35.1k & $\approx$1.0 & 200 & 61.7 \scriptsize{\textcolor{red}{(-2.0)}} & $\approx$5.0 & 1000 & 67.5 \scriptsize{\textcolor{teal}{(+3.8)}} \\
\textbf{Best-20\%} & 70.2k & $\approx$1.0 & 400 & 62.9 \scriptsize{\textcolor{red}{(-0.8)}} & $\approx$2.5 & 1000 & \textbf{69.3 \scriptsize{\textcolor{teal}{(+5.6)}}} \\
Best-30\% & 105.3k & $\approx$1.0 & 600 & 62.7 \scriptsize{\textcolor{red}{(-1.0)}} & $\approx$1.6 & 1000 & 64.1 \scriptsize{\textcolor{teal}{(+0.4)}} \\
\textbf{Best-50\%} & 175.6k & $\approx$1.0 & 1000 & \textbf{65.5 \scriptsize{\textcolor{teal}{(+1.8)}}} & $\approx$1.0 & 1000 & 65.5 \scriptsize{\textcolor{teal}{(+1.8)}} \\
\bottomrule
\end{tabular}
}
\vspace{1.0em}
% ==================== Part B: Detailed Breakdown ====================
\resizebox{0.95\textwidth}{!}{
\setlength{\tabcolsep}{2.5pt}
\begin{tabular}{l|cccc|cccc}
\multicolumn{9}{c}{\textbf{(b) Fine-grained Analysis by Category (Actual Score \& $\Delta$)}} \\
\toprule
& \multicolumn{4}{c|}{\textbf{Protocol A: Iso-data}} & \multicolumn{4}{c}{\textbf{Protocol B: Iso-Compute}} \\
\textbf{Subset} & \textbf{Arith. \& Alg.} & \textbf{Geom. \& Vis.} & \textbf{K-12 Bench.} & \textbf{Higher Ed.} & \textbf{Arith. \& Alg.} & \textbf{Geom. \& Vis.} & \textbf{K-12 Bench.} & \textbf{Higher Ed.} \\
\midrule
Base & 70.7 & 59.0 & 44.7 & 29.0 & 70.7 & 59.0 & 44.7 & 29.0 \\
Full-SFT & 74.5 \scriptsize{\textcolor{red}{(-0.8)}} & 57.0 \scriptsize{\textcolor{red}{(-6.0)}} & 60.0 \scriptsize{\textcolor{red}{(-5.3)}} & 28.0 \scriptsize{\textcolor{red}{(-13.0)}} & 74.5 \scriptsize{\textcolor{red}{(-0.8)}} & 57.0 \scriptsize{\textcolor{red}{(-6.0)}} & 60.0 \scriptsize{\textcolor{red}{(-5.3)}} & 28.0 \scriptsize{\textcolor{red}{(-13.0)}} \\
\textbf{Judge} & \textbf{75.3} & \textbf{63.0} & \textbf{65.3} & \textbf{41.0} & \textbf{75.3} & \textbf{63.0} & \textbf{65.3} & \textbf{41.0} \\
\midrule
Best-10\% & 74.0 \scriptsize{\textcolor{red}{(-1.3)}} & 58.0 \scriptsize{\textcolor{red}{(-5.0)}} & 61.3 \scriptsize{\textcolor{red}{(-4.0)}} & \textbf{44.0 \scriptsize{\textcolor{teal}{(+3.0)}}} & 76.6 \scriptsize{\textcolor{teal}{(+1.3)}} & \textbf{65.0 \scriptsize{\textcolor{teal}{(+2.0)}}} & 72.6 \scriptsize{\textcolor{teal}{(+7.3)}} & 45.0 \scriptsize{\textcolor{teal}{(+4.0)}} \\
\textbf{Best-20\%} & 76.0 \scriptsize{\textcolor{teal}{(+0.7)}} & 61.0 \scriptsize{\textcolor{red}{(-2.0)}} & 68.0 \scriptsize{\textcolor{teal}{(+2.7)}} & 34.0 \scriptsize{\textcolor{red}{(-7.0)}} & \textbf{80.0 \scriptsize{\textcolor{teal}{(+4.7)}}} & \textbf{65.0 \scriptsize{\textcolor{teal}{(+2.0)}}} & \textbf{73.3 \scriptsize{\textcolor{teal}{(+8.0)}}} & \textbf{48.0 \scriptsize{\textcolor{teal}{(+7.0)}}} \\
Best-30\% & 74.0 \scriptsize{\textcolor{red}{(-1.3)}} & 58.0 \scriptsize{\textcolor{red}{(-5.0)}} & 70.0 \scriptsize{\textcolor{teal}{(+4.7)}} & 36.0 \scriptsize{\textcolor{red}{(-5.0)}} & 76.6 \scriptsize{\textcolor{teal}{(+1.3)}} & 62.0 \scriptsize{\textcolor{red}{(-1.0)}} & 68.6 \scriptsize{\textcolor{teal}{(+3.3)}} & 37.0 \scriptsize{\textcolor{red}{(-4.0)}} \\
\textbf{Best-50\%} & \textbf{76.6 \scriptsize{\textcolor{teal}{(+1.3)}}} & \textbf{62.0 \scriptsize{\textcolor{red}{(-1.0)}}} & \textbf{72.0 \scriptsize{\textcolor{teal}{(+6.7)}}} & 39.0 \scriptsize{\textcolor{red}{(-2.0)}} & 76.6 \scriptsize{\textcolor{teal}{(+1.3)}} & 62.0 \scriptsize{\textcolor{red}{(-1.0)}} & 72.0 \scriptsize{\textcolor{teal}{(+6.7)}} & 39.0 \scriptsize{\textcolor{red}{(-2.0)}} \\
\bottomrule
\end{tabular}
}
\caption{\textbf{Impact of Data Quality vs. Compute Budget.} \textbf{(a)} Under Protocol B (Fixed Compute), the \textbf{Best-20\%} subset achieves the highest Average Score (\textcolor{teal}{+5.6\%}), significantly outperforming both the \textit{LLM-as-a-Judge} baseline and the uncurated Full-SFT setting. This validates the \textit{Less is More} hypothesis, where prioritizing data density yields superior convergence over mere data quantity. \textbf{(b)} The fine-grained breakdown reveals that while Full-SFT struggles with complex reasoning (e.g., Higher Ed), the OST-selected subsets effectively unlock these capabilities, achieving substantial gains over the Base model.}
\label{tab:combined_ablation}
\end{table*}

% ==================== MODIFIED EFFICIENCY TABLE ====================
\begin{table*}[t!]
\centering
\large
\renewcommand{\arraystretch}{1.25}
\setlength{\tabcolsep}{4pt}
\begin{tabular}{l|cc|cccc|c}
\toprule
\multirow{2}{*}{\textbf{Method}} & \multicolumn{2}{c|}{\textbf{Phase 1: Selection}} & \multicolumn{4}{c|}{\textbf{Phase 2: Training (SFT)}} & \multirow{2}{*}{\textbf{Avg. Score}} \\
\cline{2-7}
& \textbf{Resource} & \textbf{Time} & \textbf{Resource} & \textbf{Data Size} & \textbf{Epochs} & \textbf{Time} & \\
\midrule
% Split LLM-as-a-Judge row for vertical centering
\multirow{2}{*}{LLM-as-a-Judge} & API, 0 GPU & \multirow{2}{*}{5.0h$^\dagger$} & \multirow{2}{*}{64 $\times$ H100} & \multirow{2}{*}{285.2k} & \multirow{2}{*}{1} & \multirow{2}{*}{4.2h} & \multirow{2}{*}{63.7 \scriptsize{\textit{(Base)}}} \\
 & (3 Rounds) & & & & & & \\
\midrule
\textbf{OST (Top-50\%)} & \multirow{4}{*}{\shortstack{16 $\times$ H100\\(Block-wise)}} & \multirow{4}{*}{4.4h$^\dagger$} & \multirow{4}{*}{64 $\times$ H100} & 175.6k & 1 & 2.4h & \textbf{65.5 \inc{1.8}} \\
OST (Top-30\%) & & & & 105.3k & 1 & 1.5h & 62.7 \dec{1.0} \\
OST (Top-20\%) & & & & 70.2k & 1 & 0.9h & 62.9 \dec{0.8} \\
OST (Top-10\%) & & & & 35.1k & 1 & 0.5h & 61.7 \dec{2.0} \\
\bottomrule
\end{tabular}
\vspace{1.0em}
\caption{\textbf{Efficiency and Performance Trade-off on Internal Benchmark.} This table compares the wall-clock time and resulting performance. In terms of total computational cost, the Judge baseline consumes approximately \textbf{269 GPU Hours} ($64 \text{ GPUs} \times 4.2\text{h}$) for training. In contrast, even when including the selection overhead ($16 \text{ GPUs} \times 4.4\text{h} \approx 70$ GPU Hours), the \textbf{Top-50\%} OST pipeline consumes only \textbf{224 GPU Hours} in total ($70 + 154$). This achieves a \textbf{17\% reduction} in total compute resources (and a \textbf{43\% reduction} in the expensive training phase) while boosting accuracy by 1.8 points. ($^\dagger$OST or LLM-as-a-Judge selection is a one-time cost).}
\label{tab:efficiency_breakdown}
\end{table*}

\paragraph{Baselines.}
We evaluate OST against three comparative strategies to isolate the impact of data quality:
(1) \textbf{Full SFT}: Fine-tuning on the complete synthetic corpus (351k samples) to benchmark the performance ceiling of data quantity versus the risk of noise accumulation;
(2) \textbf{Random Selection}: Uniformly sampling subsets to establish a lower bound, representing the expected performance without active filtering;
(3) \textbf{LLM-as-a-Judge}: A strong baseline representing the industry standard, which utilizes Qwen3VL-235B-A22B-Instruct \cite{bai2025qwen3vltechnicalreport} to verify reasoning correctness for rejection sampling (see Appendix~\ref{app:prompts} for the verification prompt).
(4) \textbf{DEITA}: A state-of-the-art heuristic scoring baseline for data selection \cite{liu2024makesgooddataalignment}. To adapt this method to our multimodal reasoning context, we employ its open-source scorers to evaluate the textual complexity and quality of the generated CoT rationales. The candidate samples are then ranked by their combined score ($complexity \times quality$) and iteratively selected using its embedding-based diversity module (Repr Filter) to match our target data budget.

\paragraph{Implementation Framework.}
We adopt a decoupled proxy-target framework to balance selection cost and training performance. For utility scoring (Phase 1), we employ InternVL3-1B ~\citep{zhu2025internvl3exploringadvancedtraining} as a lightweight computational proxy. For downstream validation (Phase 3), we utilize two distinct target architectures: a proprietary 30B Multimodal Model (integrating a Qwen3-30B-A3B language backbone \cite{yang2025qwen3technicalreport} with a ViT-300M vision encoder \cite{dosovitskiy2021imageworth16x16words}) and the open-source Qwen3-VL series (2B, 4B, and 8B). Specific training hyperparameters and hardware configurations and detailed setups for the baselines (including \textit{LLM-as-a-Judge} and \textit{DEITA}) are provided in Appendix~\ref{app:experimental_implementation}.

\paragraph{Evaluation Protocols.}
To rigorously disentangle the impact of data quality from the computational budget, we evaluate the selected subsets $\mathcal{D}_{\mathrm{sel}}$ under two distinct training regimes:
\begin{itemize}
    \setlength\itemsep{0em}
    \item \textbf{Protocol A (Proportional Steps):} The number of training epochs is fixed across all subsets, meaning the total training steps scale linearly with the subset size. This setting evaluates \textit{time efficiency}, demonstrating how much training time can be saved in constrained production environments by using a smaller, curated dataset.
    \item \textbf{Protocol B (Fixed Compute):} The total computational budget (i.e., total optimization steps) is strictly fixed across all subsets, which naturally requires more training epochs for smaller subsets. This setting isolates intrinsic \textit{data density}, testing whether training longer on a high-quality subset yields better convergence per unit of compute without severe overfitting.
\end{itemize}

% [Insert Table 3 Here]
% [Insert Table 3 Here]
\begin{table*}[t!]
\centering
\small
\resizebox{0.87\textwidth}{!}{
\begin{tabular}{l|cccc|cc}
\toprule
\textbf{Model} & \textbf{MathVista} & \textbf{MathVision} & \textbf{WeMath} & \textbf{LogicVista} & \textbf{Avg.} & \textbf{$\Delta$} \\
\scriptsize{\textit{(Total Samples)}} & \scriptsize{\textit{(1000)}} & \scriptsize{\textit{(3040)}} & \scriptsize{\textit{(525)}} & \scriptsize{\textit{(447)}} & & \\
\midrule
\multicolumn{7}{l}{\textit{\textbf{Closed-Source SOTA Models}}} \\
Gemini3-pro & 88.5 & 83.4 & 73.8 & 81.4 & 81.8 & - \\
GPT-5-20250807 & 81.9 & 72.0 & 71.0 & 70.0 & 73.7 & - \\
Doubao-Seed-1.6 & 85.9 & 67.8 & 78.5 & 72.5 & 76.2 & - \\
GLM-4.5V & 84.6 & 65.6 & 68.8 & 62.4 & 70.4 & - \\
\midrule
\multicolumn{7}{l}{\textit{\textbf{Open-Source Large Baselines}}} \\
Qwen3-VL-235B-Thinking & 85.9 & 74.6 & 74.9 & 72.3 & 76.9 & - \\
Kimi-vl-A3B-thinking & 80.1 & 56.8 & 47.0 & 51.0 & 58.7 & - \\
\midrule
\midrule
\multicolumn{7}{l}{\textit{\textbf{Data Selection Strategy}}} \\

% ================= 2B Group =================
Qwen3-VL-2B-Instruct (Base) & 64.1 & 34.5 & 35.8 & 46.8 & 45.3 & - \\
\quad + Full SFT (100\%) & 63.4 \dec{0.7} & 34.1 \dec{0.4} & 34.9 \dec{0.9} & 45.6 \dec{1.2} & 44.5 & \textcolor{red}{-0.8} \\
\quad + Random (20\%) & 63.8 \dec{0.3} & 34.3 \dec{0.2} & 35.4 \dec{0.4} & 45.9 \dec{0.9} & 44.9 & \textcolor{red}{-0.4} \\
\quad + LLM-as-a-Judge (20\%) & 64.5 \inc{0.4} & \textbf{35.0} \inc{0.5} & 36.2 \inc{0.4} & 46.5 \dec{0.3} & 45.5 & \textcolor{teal}{+0.3} \\
\quad + DEITA (Top-20\%) & 64.6 \inc{0.5} & 34.9 \inc{0.4} & 36.5 \inc{0.7} & 46.7 \dec{0.1} & 45.7 & \textcolor{teal}{+0.4} \\
\quad + \textbf{OST (Ours, Best-20\%)}  & \textbf{64.9} \inc{0.8} & 34.8 \inc{0.3} & \textbf{37.5} \inc{1.7} & \textbf{47.0} \inc{0.2} & \textbf{46.1} & \textcolor{teal}{+0.8} \\
\midrule

% ================= 4B Group =================
Qwen3-VL-4B-Instruct (Base) & 74.9 & 54.1 & 54.3 & 60.0 & 60.8 & - \\
\quad + Full SFT (100\%) & 73.2 \dec{1.7} & 53.8 \dec{0.3} & 52.6 \dec{1.7} & 58.2 \dec{1.8} & 59.5 & \textcolor{red}{-1.3} \\
\quad + Random (20\%) & 74.2 \dec{0.7} & 54.0 \dec{0.1} & 53.9 \dec{0.4} & 59.5 \dec{0.5} & 60.4 & \textcolor{red}{-0.4} \\
\quad + LLM-as-a-Judge (20\%) & 75.3 \inc{0.4} & 54.6 \inc{0.5} & 55.0 \inc{0.7} & 59.7 \dec{0.3} & 61.2 & \textcolor{teal}{+0.4} \\
\quad + DEITA (Top-20\%) & 75.1 \inc{0.2} & 54.5 \inc{0.4} & 55.2 \inc{0.9} & 59.6 \dec{0.4} & 61.1 & \textcolor{teal}{+0.3} \\
\quad + \textbf{OST (Ours, Best-20\%)}  & \textbf{75.8} \inc{0.9} & \textbf{54.8} \inc{0.7} & \textbf{56.0} \inc{1.7} & \textbf{60.4} \inc{0.4} & \textbf{61.8} & \textcolor{teal}{+1.0} \\
\midrule

% ================= 8B Group =================
Qwen3-VL-8B-Instruct (Base) & 77.2 & 57.8 & 57.5 & 61.3 & 63.5 & - \\
\quad + Full SFT (100\%) & 70.5 \dec{6.7} & 54.8 \dec{3.0} & 55.2 \dec{2.3} & 60.0 \dec{1.3} & 60.1 & \textcolor{red}{-3.4} \\
\quad + Random (20\%) & 76.1 \dec{1.1} & 57.1 \dec{0.7} & 56.8 \dec{0.7} & 60.6 \dec{0.7} & 62.7 & \textcolor{red}{-0.8} \\
\quad + LLM-as-a-Judge (20\%) & \textbf{77.6} \inc{0.4} & 58.2 \inc{0.4} & 58.1 \inc{0.6} & 60.9 \dec{0.4} & 63.7 & \textcolor{teal}{+0.2} \\
\quad + DEITA (Top-20\%) & 77.5 \inc{0.3} & 58.3 \inc{0.5} & 58.6 \inc{1.1} & 61.2 \dec{0.1} & 63.9 & \textcolor{teal}{+0.4} \\
\quad + \textbf{OST (Ours, Best-20\%)} & 77.4 \inc{0.2} & \textbf{58.5} \inc{0.7} & \textbf{59.0} \inc{1.5} & \textbf{61.5} \inc{0.2} & \textbf{64.1} & \textcolor{teal}{+0.6} \\
\bottomrule
\end{tabular}
}

\vspace{1.0em}
\caption{\textbf{Benchmarking Data Selection Strategies.} Comparative results on Qwen3-VL series. The table highlights the trade-offs between different selection methods. While the heuristic scoring baseline (\textit{DEITA \citep{liu2024makesgooddataalignment} Top-20\%}) performs competitively and often matches the \textit{LLM-as-a-Judge} approach, both methods fall short on complex logic tasks compared to optimization-grounded methods. Specifically, they struggle with deep reasoning logic (\textit{LogicVista}), sometimes degrading performance compared to Base. In contrast, \textbf{OST (Best-20\%)} consistently improves complex reasoning capabilities (\textit{WeMath}, \textit{LogicVista}) and provides the highest average gain across all model scales, effectively countering the negative transfer observed in Full SFT.}
\label{tab:selection_comparison}
\end{table*}

% ==================== FIGURE (Double Column) ====================
\begin{figure*}[t]
    \centering
    \includegraphics[width=0.95\linewidth]{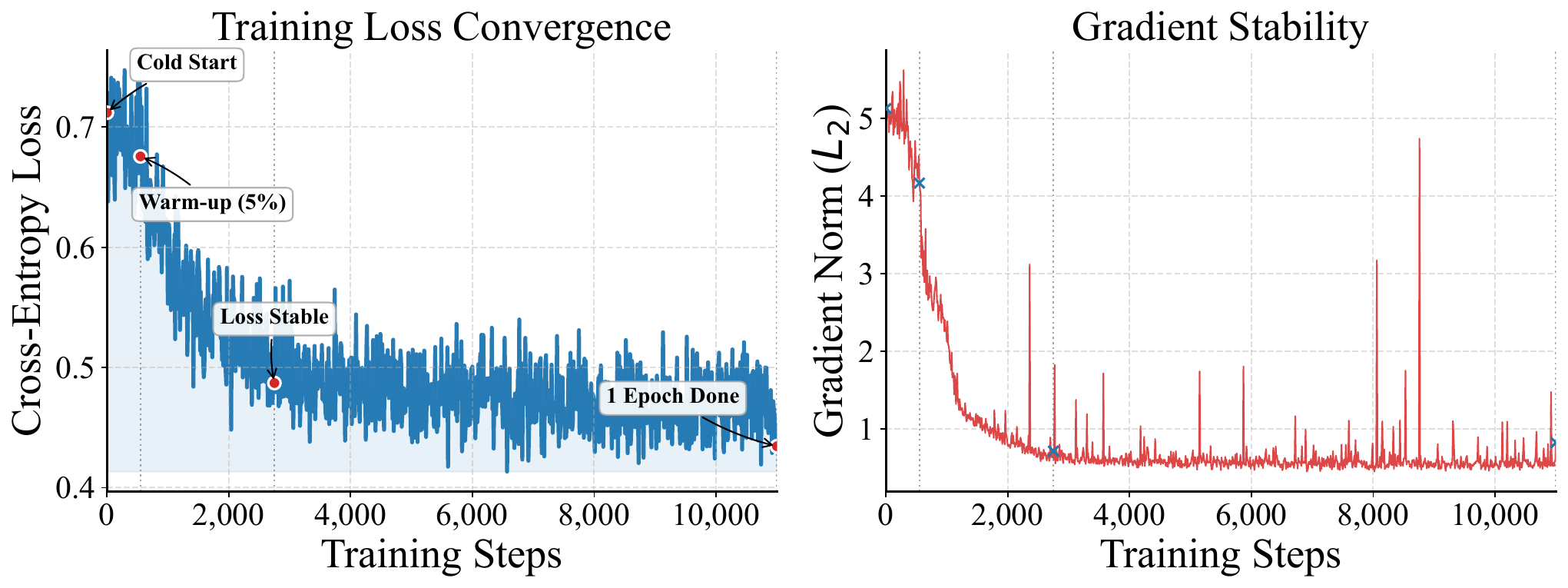}
    \vspace{-0.5em}
    \caption{\textbf{Training Dynamics and Checkpoint Selection.} Checkpoints are saved at four representative stages to score the held-out data pool.}
    \label{fig:training_dynamics}
\end{figure*}
% ================================================================

\subsection{Main Results}
\label{sec:results}

We empirically validate the One-Step-Train (OST) framework across computational efficiency and downstream model capability.

\paragraph{Utility Distribution and Qualitative Analysis.}
To elucidate the selection mechanism, we visualize the distribution of proxy utility scores in Figure~\ref{fig:score_distribution}. The distribution exhibits a long left tail (negative values), corresponding to samples that effectively reduce validation loss ($\Delta_i > 0$ in our definition). Conversely, the probability mass centered around zero or positive values represents samples with negligible or conflicting gradient directions. By applying a hard threshold, OST explicitly isolates the high-utility region while filtering out the toxic tail that potentially destabilizes training. For a concrete analysis of the semantic differences between these partitions, we refer readers to the Qualitative Case Study in Appendix~\ref{app:Case_Study}. Additionally, we verify the statistical independence of these utility scores in Appendix~\ref{app:experimental_implementation}, confirming that the ranking is robust to processing order.

\begin{figure}[H]
    \centering
    \includegraphics[width=0.99\linewidth]{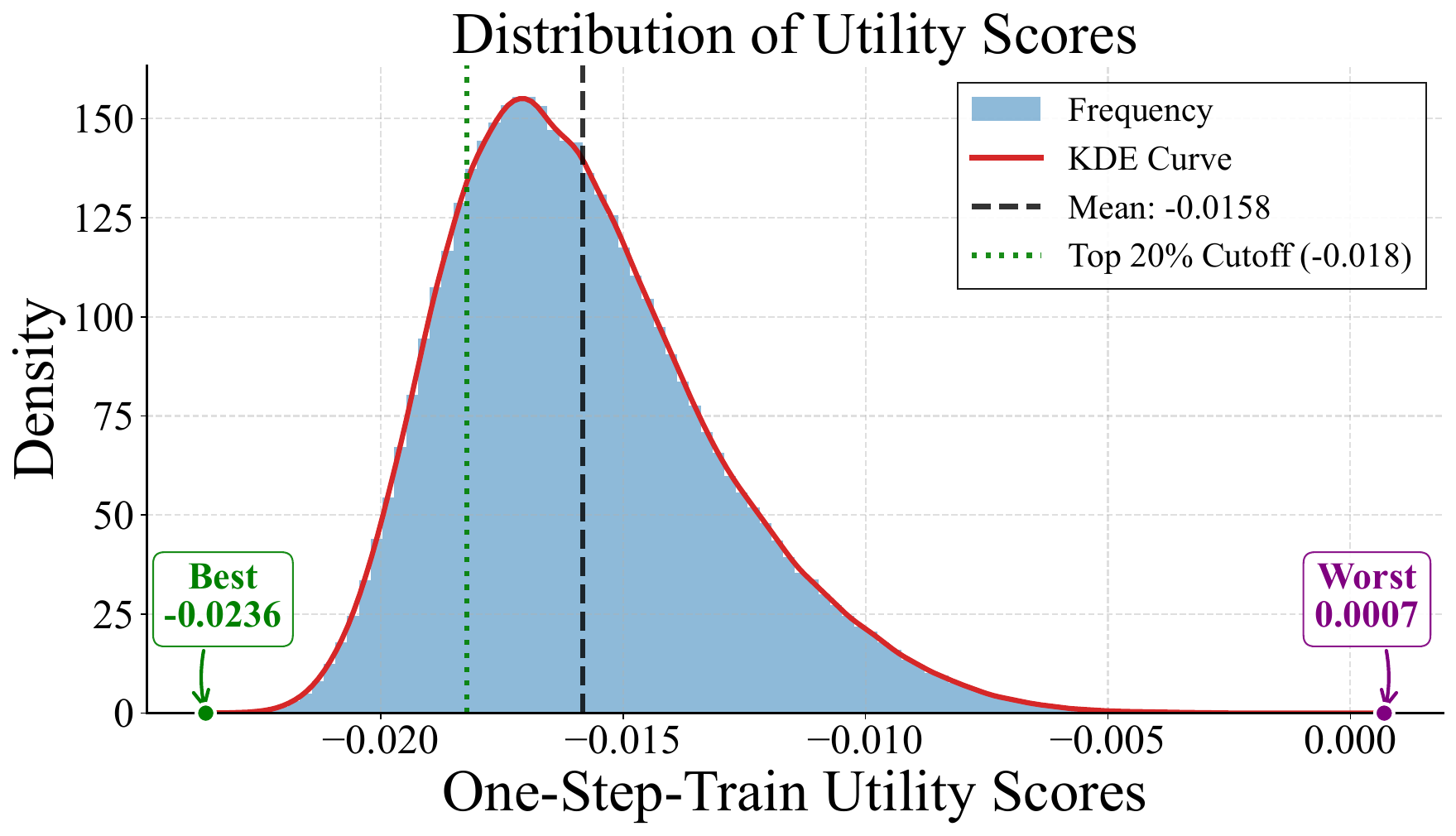} 
    \caption{\textbf{Distribution of One-Step-Train Utility Scores.} Samples in the Best region (left tail) exhibit strong alignment with the validation anchor, whereas Worst samples (right tail) introduce noise or conflicting updates. The vertical dotted line marks the Top-20\% selection threshold.}
    \label{fig:score_distribution}
\end{figure}

\paragraph{Data Density.}
We further isolate the impact of data quality under a fixed compute budget (Protocol B in Table~\ref{tab:combined_ablation}). A \textit{Less is More} phenomenon is observed: the \textbf{Best-20\%} subset yields the highest return on compute, achieving an average score of \textbf{69.3}. This performance significantly outperforms the industry-standard \textit{LLM-as-a-Judge} baseline (63.7) and the larger Top-50\% subset (65.5). Most notably, it surpasses the Full-SFT setting (60.5) by a substantial margin of \textbf{8.8} points. These results suggest that deep optimization (more epochs) on a compact, high-gradient-utility core is far more effective than shallow training on large, noisy corpora.

\paragraph{Pareto-Optimal Cost.}
Table~\ref{tab:efficiency_breakdown} delineates the prohibitive computational overhead inherent in the semantic filtering paradigm: the \textit{LLM-as-a-Judge} baseline incurs substantial training overhead (269 GPU Hours) due to high data retention. In contrast, OST achieves a \textbf{Pareto improvement} in resource allocation. Even accounting for the selection overhead (approx. 70 GPU Hours), the Top-50\% pipeline reduces the total computational cost by \textbf{17\%}. Notably, for the repetitive SFT phase, OST reduces resource consumption by \textbf{43\%} (154h vs. 269h) while simultaneously improving average accuracy by 1.8 points, demonstrating superior scalability for production environments.

\paragraph{Generalization and Negative Transfer.}
Finally, we verify the transferability of our findings to the Qwen3-VL series (Table~\ref{tab:selection_comparison}). A critical finding is the reversal of \textit{negative transfer} in complex reasoning tasks. On benchmarks like \textit{LogicVista}, Full SFT degrades performance compared to the base model (e.g., -1.8 points on Qwen3-VL-4B), likely due to overfitting on hallucinated reasoning chains. While heuristic scoring methods like \textit{DEITA}\citep{liu2024makesgooddataalignment} successfully mitigate this severe degradation by filtering out obvious low-quality noise, they still fall short of optimization-grounded methods in maximizing reasoning gains. Because surface-level heuristics (e.g., length, structural complexity) cannot reliably detect whether a seemingly "complex" reasoning chain contains subtle mathematical flaws, they may still retain samples that limit the model's reasoning potential. In contrast, OST identifies and purges toxic samples based on their actual optimization utility, achieving the highest net positive gain (+0.4). This confirms that gradient-based utility serves as a more robust, model-agnostic metric for filtering subtle logical errors than heuristic and semantic judges.

\subsection{Ablation: Utility Dynamics across Training States}
\label{sec:stability_ablation}

To investigate the impact of proxy optimization on selection quality, we conducted a split-half ablation study. We fine-tuned the InternVL3-1B proxy on the synthetic set $\mathcal{D}_{\text{train}}$ and evaluated checkpoints from four phases (Figure~\ref{fig:training_dynamics}) by filtering the held-out $\mathcal{D}_{\text{pool}}$.

% ==================== TABLE (Single Column) ====================
\begin{table}[h]
\centering
\small
\renewcommand{\arraystretch}{1.25}
\setlength{\tabcolsep}{10pt}
\resizebox{1.0\linewidth}{!}{
\begin{tabular}{l|c|cc}
\toprule
\textbf{Proxy Checkpoint} & \textbf{Progress} & \textbf{Avg. Score} & \textbf{$\Delta$} \\
\midrule
\textit{baseline} & - & 63.5 & - \\
\midrule
Cold Start & 0\% & 63.9 & \textcolor{teal}{+0.4} \\
\textbf{Warm-up} & \textbf{5\%} & \textbf{64.1} & \textcolor{teal}{\textbf{+0.6}} \\
Loss Stable & 25\% & 63.9 & \textcolor{teal}{+0.4} \\
Converged & 100\% & 63.7 & \textcolor{teal}{+0.2} \\
\bottomrule
\end{tabular}
}
\caption{\textbf{Ablation on Proxy Training State.} The Warm-up checkpoint achieves optimal selection performance. Further training degrades discriminatory capability due to overfitting to synthetic noise.}
\label{tab:checkpoint_ablation}
\end{table}
% ===============================================================

Table~\ref{tab:checkpoint_ablation} reveals a non-monotonic trend where performance peaks at the \textbf{Warm-up} stage (5\%, \textbf{+0.6}). While the pre-trained proxy (0\%) offers effective zero-shot filtering (+0.4), further training to convergence degrades performance (+0.2). We attribute this drop to \textbf{noise overfitting} \cite{arpit2017closerlookmemorizationdeep}: a converged proxy assimilates synthetic hallucinations, losing its discriminative power. Thus, a minimally tuned proxy optimally balances CoT adaptation with noise robustness.

\section{Conclusion}

This work reformulates synthetic data curation by shifting from opaque semantic filtering to transparent, optimization-grounded selection. Our \textbf{One-Step-Train (OST)} framework demonstrates that marginal data utility can be effectively captured via simulated gradient updates on lightweight proxies, bypassing the overhead of traditional influence functions. The empirical gains achieved across the QwenVL underscore a critical insight: model scaling is constrained less by data scarcity than by the accumulation of reasoning hallucinations. By identifying and purging these toxic samples, OST provides a Pareto-optimal path for data-efficient training. Looking forward, grounding data quality in optimization dynamics offers a robust foundation for autonomous, self-improving reasoning systems.

\section*{Limitations}

\paragraph{Dependency on Anchor Quality.}
OST relies on the validation set $\mathcal{V}$ to define the optimization direction. Consequently, the method is sensitive to the representativeness of $\mathcal{V}$. If the anchor set is biased or insufficient, the utility metric $\Delta_i$ risks guiding the model to overfit specific artifacts rather than acquiring generalized reasoning, posing a challenge for subjective domains where ground truth is ambiguous.

\paragraph{Point-wise Ranking vs. Collective Influence.}
OST treats selection as a greedy, point-wise ranking problem. As noted in \textit{Most Influential Subset Selection (MISS)} research \citep{hu2025influentialsubsetselectionchallenges}, this additive approach neglects data redundancy and interaction effects. Consequently, OST may select a subset that is individually high-scoring but collectively suboptimal (e.g., lacking diversity) compared to holistic subset selection strategies.

\paragraph{Bounds of Proxy Transferability.}
While our experiments validate utility transfer from 1B to 8B models (supported by the Subspace Alignment theory in Appendix~\ref{app:proof_theorem_projection}), the limits of this mechanism remain unexplored. The stability of gradient alignment across heterogeneous architectures (e.g., Transformer to SSM) or extreme parameter scales warrants further theoretical investigation to bound the projection error.

\bibliographystyle{ACM-Reference-Format}
\bibliography{sample-sigconf-authordraft}

\clearpage
\appendix

\section{Theoretical Proofs}
\label{app:appendix_proofs}

In this section, we provide detailed derivations and proofs for the theoretical results presented in Section \ref{sec:preliminaries}. We begin by formally stating the smoothness and boundedness assumptions that underpin our \textbf{One-Step-Train (OST)} framework.

\paragraph{General Assumptions.}
Throughout the proofs, we assume the validation loss function $L_{\mathcal{V}}(\boldsymbol{\theta})$ and sample-wise loss $\ell(z_i; \boldsymbol{\theta})$ satisfy the following properties within the local optimization neighborhood:
\begin{itemize}
    \item \textbf{$\beta$-Smoothness:} The gradient $\nabla L_{\mathcal{V}}$ is $\beta$-Lipschitz continuous, i.e., $\|\nabla L_{\mathcal{V}}(\boldsymbol{\theta}') - \nabla L_{\mathcal{V}}(\boldsymbol{\theta})\| \le \beta \|\boldsymbol{\theta}' - \boldsymbol{\theta}\|$.
    \item \textbf{Bounded Gradients:} Gradient norms are bounded by constants $B_{\mathcal{V}}$ and $B_i$, such that $\|\nabla L_{\mathcal{V}}\| \le B_{\mathcal{V}}$ and $\|\nabla \ell(z_i)\| \le B_i$.
\end{itemize}

\subsection{Proof of One-Step-Train Utility Expansion}
\label{app:proof_lemma_taylor}

We first prove that the utility $\Delta_i(\boldsymbol{\theta})$ is dominated by the gradient inner product.

\textbf{Lemma 1.} \textit{If $L_{\mathcal{V}}$ is $\beta$-smooth, then the utility can be expanded as $\Delta_i(\boldsymbol{\theta}) = \eta \, \boldsymbol{g}_{\mathcal{V}}(\boldsymbol{\theta})^\top \boldsymbol{g}_i(\boldsymbol{\theta}) + r_i$, where the remainder is bounded by $|r_i| \le \frac{\beta \eta^2}{2} \|\boldsymbol{g}_i(\boldsymbol{\theta})\|^2$.}

\begin{proof}
Recall the one-step update rule: $\boldsymbol{\theta}_i^{+} = \boldsymbol{\theta} - \eta \boldsymbol{g}_i(\boldsymbol{\theta})$. Let the parameter update vector be $\boldsymbol{d}_i = \boldsymbol{theta}_i^{+} - \boldsymbol{\theta} = - \eta \boldsymbol{g}_i(\boldsymbol{\theta})$.
By Taylor's theorem with the Lagrange remainder form, for the function $L_{\mathcal{V}}(\boldsymbol{\theta})$, there exists a point $\boldsymbol{\xi}$ on the line segment connecting $\boldsymbol{\theta}$ and $\boldsymbol{\theta}_i^{+}$ such that:
\begin{align}
L_{\mathcal{V}}(\boldsymbol{\theta}_i^{+}) &= L_{\mathcal{V}}(\boldsymbol{\theta}) + \nabla L_{\mathcal{V}}(\boldsymbol{\theta})^\top \boldsymbol{d}_i \nonumber \\
&\quad + \frac{1}{2} \boldsymbol{d}_i^\top \nabla^2 L_{\mathcal{V}}(\boldsymbol{\xi}) \boldsymbol{d}_i.
\end{align}
Rearranging terms to match the definition of utility $\Delta_i(\boldsymbol{\theta}) \triangleq L_{\mathcal{V}}(\boldsymbol{\theta}) - L_{\mathcal{V}}(\boldsymbol{\theta}_i^{+})$:
\begin{align}
\Delta_i(\boldsymbol{\theta}) &= - \nabla L_{\mathcal{V}}(\boldsymbol{\theta})^\top \boldsymbol{d}_i - \frac{1}{2} \boldsymbol{d}_i^\top \nabla^2 L_{\mathcal{V}}(\boldsymbol{\xi}) \boldsymbol{d}_i \nonumber \\
&= - \boldsymbol{g}_{\mathcal{V}}(\boldsymbol{\theta})^\top (-\eta \boldsymbol{g}_i(\boldsymbol{\theta})) - \frac{1}{2} \boldsymbol{d}_i^\top \nabla^2 L_{\mathcal{V}}(\boldsymbol{\xi}) \boldsymbol{d}_i \nonumber \\
&= \eta \, \boldsymbol{g}_{\mathcal{V}}(\boldsymbol{\theta})^\top \boldsymbol{g}_i(\boldsymbol{\theta}) + r_i,
\end{align}
where the remainder term is $r_i = - \frac{1}{2} \boldsymbol{d}_i^\top \nabla^2 L_{\mathcal{V}}(\boldsymbol{\xi}) \boldsymbol{d}_i$.

Under the $\beta$-smoothness assumption, the spectral norm (largest eigenvalue) of the Hessian is bounded by $\beta$, i.e., $\|\nabla^2 L_{\mathcal{V}}(\boldsymbol{\xi})\|_2 \le \beta$. We can thus bound the magnitude of the remainder:
\begin{align}
|r_i| &\le \frac{1}{2} \|\boldsymbol{d}_i\|^2 \|\nabla^2 L_{\mathcal{V}}(\boldsymbol{\xi})\|_2 \nonumber \\
&\le \frac{\beta}{2} \| - \eta \boldsymbol{g}_i(\boldsymbol{\theta}) \|^2 = \frac{\beta \eta^2}{2} \|\boldsymbol{g}_i(\boldsymbol{\theta})\|^2.
\end{align}
This confirms that for small $\eta$, the linear term dominates, validating Eq. (4) in the main text.
\end{proof}

\subsection{Proof of Ranking Consistency}
\label{app:proof_theorem_consistency}

\textbf{Theorem 1.} \textit{Let $s_i(\boldsymbol{\theta})= \boldsymbol{g}_{\mathcal{V}}(\boldsymbol{\theta})^\top \boldsymbol{g}_i(\boldsymbol{\theta})$. If the score margin satisfies $|s_i(\boldsymbol{\theta}) - s_j(\boldsymbol{\theta})| \ge \gamma$ and $\eta$ is sufficiently small, then $\Delta_i(\boldsymbol{\theta}) > \Delta_j(\boldsymbol{\theta}) \iff s_i(\boldsymbol{\theta}) > s_j(\boldsymbol{\theta})$.}

\begin{proof}
Using the expansion from Lemma 1, the utility difference between two samples $z_i$ and $z_j$ is:
\begin{align}
\Delta_i - \Delta_j &= (\eta s_i + r_i) - (\eta s_j + r_j) \nonumber \\
&= \eta (s_i - s_j) + (r_i - r_j).
\end{align}
Without loss of generality, assume $s_i > s_j$, implying $s_i - s_j \ge \gamma$. To guarantee $\Delta_i > \Delta_j$, we need the linear term to overpower the remainder noise:
\begin{equation}
\eta (s_i - s_j) > |r_i - r_j|.
\end{equation}
Using the triangle inequality and the bound from Lemma 1:
\begin{align}
|r_i - r_j| &\le |r_i| + |r_j| \nonumber \\
&\le \frac{\beta \eta^2}{2} (\|\boldsymbol{g}_i\|^2 + \|\boldsymbol{g}_j\|^2).
\end{align}
Substituting the margin $\gamma$, the sufficient condition becomes:
\begin{equation}
\eta \gamma \ge \frac{\beta \eta^2}{2} (\|\boldsymbol{g}_i\|^2 + \|\boldsymbol{g}_j\|^2).
\end{equation}
Dividing by $\eta$ (since $\eta > 0$) and rearranging yields the bound for the learning rate:
\begin{equation}
\eta \le \frac{2\gamma}{\beta(\|\boldsymbol{g}_i\|^2 + \|\boldsymbol{g}_j\|^2)}.
\end{equation}
Thus, providing $\eta$ satisfies this condition (which is typical in fine-tuning where $\eta$ is small), the ranking based on the gradient inner product $s_i$ is theoretically consistent with the true utility $\Delta_i$.
\end{proof}

\subsection{Proof of Ranking Stability}
\label{app:proof_theorem_stability}

We analyze how the score $s_{i,t} = \boldsymbol{g}_{\mathcal{V}}(\boldsymbol{\theta}_t)^\top \boldsymbol{g}_i(\boldsymbol{\theta}_t)$ changes as the model parameters drift from $\boldsymbol{\theta}_0$ to $\boldsymbol{\theta}_t$.

\textbf{Lemma 2 (Score Stability).} \textit{Let parameter drift be bounded by $\|\boldsymbol{\theta}_t - \boldsymbol{\theta}_0\| \le R$. Under Lipschitz continuity assumptions with constants $L_{\mathcal{V}}, L_i$, the score change is bounded by:}
\begin{equation}
|s_{i,t}-s_{i,0}| \le (L_{\mathcal{V}}B_i + L_iB_{\mathcal{V}})R.
\end{equation}

\begin{proof}
Let $\boldsymbol{\delta} = \boldsymbol{\theta}_t - \boldsymbol{\theta}_0$. We decompose the score difference:
\begin{align}
s_{i,t} - s_{i,0} &= \boldsymbol{g}_{\mathcal{V}}(\boldsymbol{\theta}_t)^\top \boldsymbol{g}_i(\boldsymbol{\theta}_t) - \boldsymbol{g}_{\mathcal{V}}(\boldsymbol{\theta}_0)^\top \boldsymbol{g}_i(\boldsymbol{\theta}_0) \nonumber \\
&= \boldsymbol{g}_{\mathcal{V}}(\boldsymbol{\theta}_t)^\top (\boldsymbol{g}_i(\boldsymbol{\theta}_t) - \boldsymbol{g}_i(\boldsymbol{\theta}_0)) \nonumber \\
&\quad + (\boldsymbol{g}_{\mathcal{V}}(\boldsymbol{\theta}_t) - \boldsymbol{g}_{\mathcal{V}}(\boldsymbol{\theta}_0))^\top \boldsymbol{g}_i(\boldsymbol{\theta}_0).
\end{align}
Applying the Cauchy-Schwarz inequality and Lipschitz definitions:
\begin{align}
|s_{i,t} - s_{i,0}| &\le \|\boldsymbol{g}_{\mathcal{V}}(\boldsymbol{\theta}_t)\| \|\boldsymbol{g}_i(\boldsymbol{\theta}_t) - \boldsymbol{g}_i(\boldsymbol{\theta}_0)\| \nonumber \\
&\quad + \|\boldsymbol{g}_{\mathcal{V}}(\boldsymbol{\theta}_t) - \boldsymbol{g}_{\mathcal{V}}(\boldsymbol{\theta}_0)\| \|\boldsymbol{g}_i(\boldsymbol{\theta}_0)\| \nonumber \\
&\le B_{\mathcal{V}} \cdot (L_i \|\boldsymbol{\theta}_t - \boldsymbol{\theta}_0\|) + (L_{\mathcal{V}} \|\boldsymbol{\theta}_t - \boldsymbol{\theta}_0\|) \cdot B_i \nonumber \\
&= (B_{\mathcal{V}} L_i + L_{\mathcal{V}} B_i) \|\boldsymbol{\theta}_t - \boldsymbol{\theta}_0\|.
\end{align}
Substituting $\|\boldsymbol{\theta}_t - \boldsymbol{\theta}_0\| \le R$ yields the lemma.
\end{proof}

\textbf{Theorem 2 (Ranking Preservation).} \textit{If the initial margin satisfies $|s_{i,0}-s_{j,0}| > 2(L_{\mathcal{V}}(B_i+B_j)+(L_i+L_j)B_{\mathcal{V}})R$, then $s_{i,0}>s_{j,0} \implies s_{i,t}>s_{j,t}$.}

\begin{proof}
Let $M_0 = s_{i,0} - s_{j,0}$ be the initial margin. The deviation of the margin at step $t$ is:
\begin{equation}
(s_{i,t} - s_{j,t}) - M_0 = (s_{i,t} - s_{i,0}) - (s_{j,t} - s_{j,0}).
\end{equation}
Using Lemma 2, we bound the total deviation:
\begin{align}
|Deviation| &\le |s_{i,t} - s_{i,0}| + |s_{j,t} - s_{j,0}| \nonumber \\
&\le \underbrace{[L_{\mathcal{V}}(B_i + B_j) + B_{\mathcal{V}}(L_i + L_j)]}_{\text{Constant } K} R.
\end{align}
The condition states $M_0 > 2KR$. Thus, the worst-case margin at step $t$ is:
\begin{equation}
s_{i,t} - s_{j,t} \ge M_0 - KR > 2KR - KR > 0.
\end{equation}
Since the margin remains positive, the ranking order is preserved.
\end{proof}

\section{Cross-Model Transferability}
\label{app:proof_theorem_projection}

To justify the transferability of utility scores from a lightweight proxy to a large target model, we rely on the \textit{Subspace Alignment Hypothesis}. This framework posits that models trained on similar data manifolds share a common low-rank gradient structure, differing primarily by a rotation and scaling factor.

\paragraph{Gradient Subspace Alignment Assumption.}
Let $\boldsymbol{g}^{(m)}$ and $\boldsymbol{g}^{(M)}$ denote the gradient vectors of the proxy model $m$ and target model $M$, respectively. We assume there exists a linear isometry $\mathbf{P}$ (representing projection/rotation between feature spaces) and a global scaling factor $\alpha > 0$ such that the target gradients can be approximated by the proxy gradients with a bounded error $\epsilon$:
\begin{equation}
\|\boldsymbol{g}^{(M)} - \alpha \mathbf{P} \boldsymbol{g}^{(m)}\| \le \epsilon.
\end{equation}

\paragraph{Implication for Ranking Consistency.}
Under this assumption, the utility score on the target model $s_i^{(M)}$ can be analytically decomposed into the scaled proxy score and a residual noise term due to misalignment:
\begin{equation}
s_i^{(M)} \approx \alpha^2 s_i^{(m)} + \mathcal{R}_{\text{noise}},
\end{equation}
where $s_i^{(m)}$ is the utility calculated on the lightweight proxy. 

This relationship implies that the **ranking order is preserved** ($s_i^{(M)} > s_j^{(M)}$ implies $s_i^{(m)} > s_j^{(m)}$) provided that the \textit{utility margin} measured by the proxy is sufficiently large to dominate the projection error:
\begin{equation}
\underbrace{\alpha^2 (s_i^{(m)} - s_j^{(m)})}_{\text{Scaled Proxy Margin}} > \underbrace{|\mathcal{R}_{\text{noise}, i} - \mathcal{R}_{\text{noise}, j}|}_{\text{Alignment Noise}}.
\end{equation}
Therefore, OST is theoretically grounded to select high-utility data for the target model as long as the proxy maintains a structural gradient alignment (low $\epsilon$) with the target architecture.

\begin{figure*}[h]
    \centering
    \includegraphics[width=1.0\linewidth]{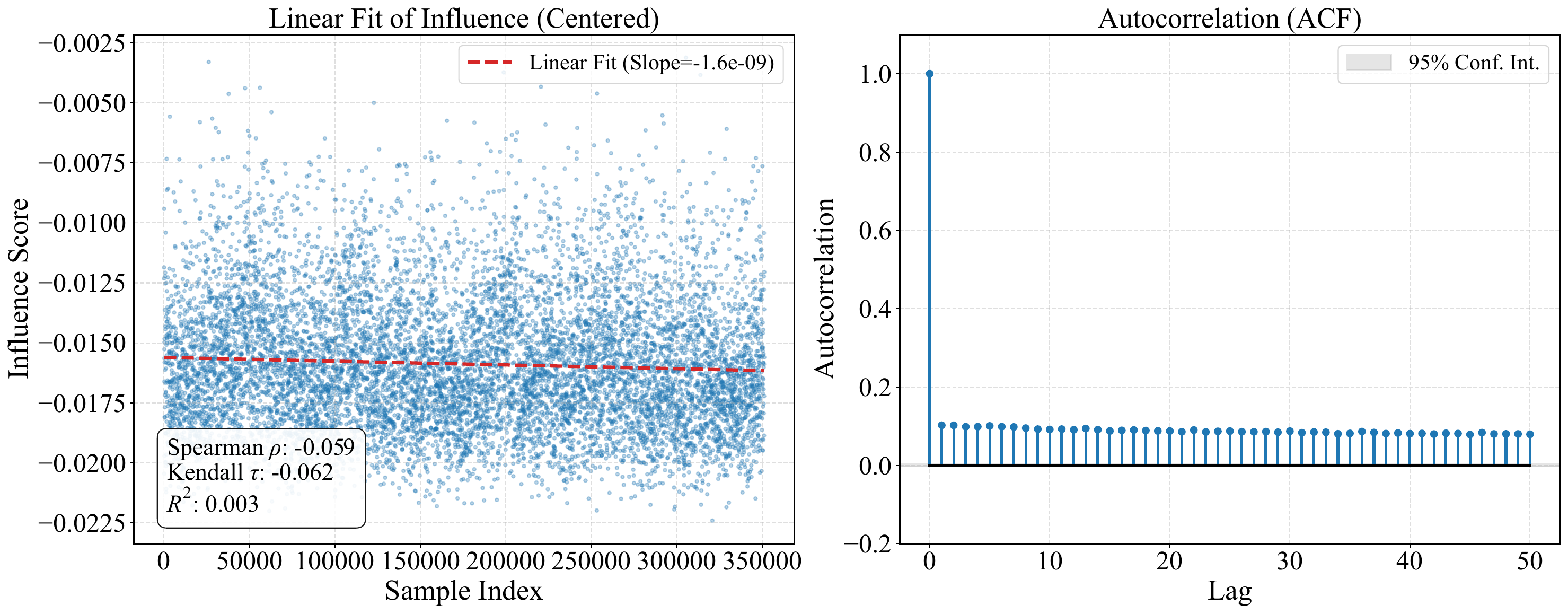}
    \caption{\textbf{Statistical Verification of Utility Independence.} \textit{Left}: The linear fit of influence scores against the sample index shows no significant temporal drift ($R^2 \approx 0.003$). \textit{Right}: The Autocorrelation Function (ACF) indicates no significant dependency between consecutive samples, validating the independent nature of the proxy scoring.}
    \label{fig:influence_analysis}
\end{figure*}

\section{Implementation Details}
\label{app:experimental_implementation}

\subsection{Data Construction Pipeline}
\label{sec:data_details}
Our training corpus originates from a proprietary collection of authentic mathematical problems acquired from professional data vendors. These resources are extensively sourced from standard textbooks and auxiliary teaching materials. To simulate realistic visual reasoning scenarios, we employed an \textbf{inverse rendering} technique: the textual question stems, geometric diagrams, and multiple-choice options were rendered together onto a single composite image. To bridge the reasoning gap in this raw visual data, we deployed a robust synthesis pipeline:
\begin{itemize}
    \item \textbf{CoT Synthesis:} We utilized \textbf{Doubao-seed-1.6-thinking} to generate step-by-step Chain-of-Thought (CoT) rationales (Prompt detailed in Figure \ref{fig:synthesis_prompt}). The generation process was configured with a temperature $T=0.3$, $top\_p=0.95$, and a maximum token limit of 16,384. Under these settings, the model was prompted to decompose complex logic into intermediate derivations before concluding with a final answer.
    
    \item \textbf{Quality Filtration:} To ensure high data fidelity, we applied a rigorous three-stage filtering protocol to the raw synthetic corpus: (1) \textbf{Format Verification} to ensure valid Markdown syntax and the presence of a \texttt{\textbackslash\textbackslash boxed \{\}} answer; (2) \textbf{Ground Truth Consistency} check, where samples are discarded if the synthesized answer deviates from the gold standard; and (3) \textbf{Length Constraint}: we observed that responses exceeding 10,000 tokens frequently exhibited degeneration issues (e.g., repetition loops), and thus excluded these samples to maintain data density.
\end{itemize}
The final filtered dataset $\mathcal{D}_{\mathrm{syn}}$ comprises 351,157 high-quality samples with an average length of 2,880 tokens.

\subsection{OST Utility Scoring (Phase 1)}
We employ \textbf{InternVL3-1B} as the lightweight computational proxy. The scoring pipeline is implemented on a node with 8 $\times$ NVIDIA H100 GPUs using \textbf{DeepSpeed ZeRO Stage 1} \cite{rajbhandari2020zeromemoryoptimizationstraining}. To overcome the I/O bottleneck inherent in the \textit{train-validate-revert} loop, we customized the \texttt{Accelerate} library to enable persistent in-memory caching of model parameters and optimizer states.

\begin{itemize}
    \item \textbf{Anchor Set ($\mathcal{V}$):} We constructed a representative validation anchor by stratified sampling 100 examples from a held-out pool of 500 problems. To minimize data loading latency during frequent evaluation, these samples are pre-loaded and pinned in CPU memory.
    
    \item \textbf{Gradient Simulation:} For each candidate $z_i$, we simulate a single optimization step using SGD (LR $1\text{e-}7$, batch size 1). We measure utility $\Delta_i$ as the reduction in cross-entropy loss on $\mathcal{V}$ immediately following this look-ahead update. Thanks to the state-caching mechanism, we achieved a high throughput of \textbf{0.5--1.0 seconds} per sample in \texttt{bfloat16} precision.
    \item \textbf{Independence Verification:} To ensure that the utility scores ($\Delta_i$) reflect the intrinsic quality of the data rather than artifacts of the sequential processing (e.g., curriculum effects or state drift), we conducted a statistical independence test. As shown in Figure~\ref{fig:influence_analysis}, the scores exhibit a negligible linear trend (Slope $\approx -1.6 \times 10^{-9}$, $R^2 \approx 0.003$) and near-zero autocorrelation across lags. This confirms that the scoring process is statistically stationary and robust to permutation.

\end{itemize}

\subsection{Target Model Training (Phase 2)}
We evaluate the selected subsets on two distinct target architectures using standard Supervised Fine-Tuning (SFT).

\paragraph{Internal 30B Multimodal Model.}
This model integrates a Qwen3-30B-A3B language backbone with a ViT-300M vision encoder. Training is conducted on a cluster of \textbf{64 $\times$ NVIDIA H100 GPUs} employing 8-way Tensor Parallelism (TP) and Pipeline Parallelism (PP). To prevent catastrophic forgetting, we adopt a data mixing strategy where the selected math data is combined with general text corpora (sampling weight 0.25) and multimodal interleaved data (weight 0.025). Optimization uses the \textbf{AdamW} optimizer with $\beta_1=0.9, \beta_2=0.95$, a global batch size of 64, and a peak learning rate of $1\text{e-}5$ with cosine decay.

\paragraph{Open-Source Qwen3-VL Series.}
To verify generalization, we fine-tune the Qwen3-VL-2B, 4B, and 8B Instruct models. Experiments run on a single node with \textbf{8 $\times$ NVIDIA H100 GPUs} using \textbf{DeepSpeed ZeRO Stage 3}. We freeze the vision encoder and only update the LLM backbone. Hyperparameters include a learning rate of $1\text{e-}7$, a global batch size of 32 (16 for the 8B model), and a maximum sequence length of 16,384 tokens.

\subsection{Evaluation and Baselines}
\paragraph{Baselines.}
We compare OST against: (1) \textbf{Full SFT}, using the entire dataset to establish a performance ceiling/floor; (2) \textbf{Random Selection}, providing a lower bound; and (3) \textbf{LLM-as-a-Judge}, the industry standard where \textbf{Qwen3VL-235B-A22B-Instruct} acts as a verifier to perform rejection sampling based on reasoning correctness (Prompt detailed in Figure \ref{fig:judge_prompt}).

\paragraph{Inference Protocol.}
We adopt a dual-track evaluation covering our 500-sample \textbf{Internal Benchmark} (10 sub-domains) and four external benchmarks (\textbf{MathVision}, \textbf{MathVista}, \textbf{WeMath}, \textbf{LogicVista}). Inference is accelerated via \textbf{vLLM} \cite{kwon2023efficientmemorymanagementlarge} with temperature $T=0.6$, $top\_p=0.95$, and a max token limit of 32,768. Open-ended responses are scored by \textbf{GPT-4o-mini} \cite{openai2024gpt4ocard} to ensure impartial evaluation.

\section{Case Study}
\label{app:Case_Study}

We present a comparative analysis focusing on the integrity of multimodal data and the reliability of automated verification. To illustrate the distinction between toxic noise and high-utility data, we pair bad samples (failures/bias, ranking bottom 5\%) with good samples (strong alignment, ranking top 5\%) in Figures \ref{fig:pair_recognition}, \ref{fig:pair_context}.

\section{Prompts}
\label{app:prompts}

This section details the specific instructions used for data synthesis and the baseline comparison. Figure \ref{fig:synthesis_prompt} illustrates the prompt used to generate step-by-step reasoning chains, while Figure \ref{fig:judge_prompt} displays the verification instruction used for the LLM-as-a-Judge baseline.

% ==========================================
% Pair 1: Basic Recognition vs. Conceptual Grounding
% ==========================================
\clearpage
\begin{figure*}[t!]
    \centering
    % --- Negative: Recognition Failure ---
    \begin{minipage}{.85\textwidth}
        \centering
        \includegraphics[width=1.0\linewidth]{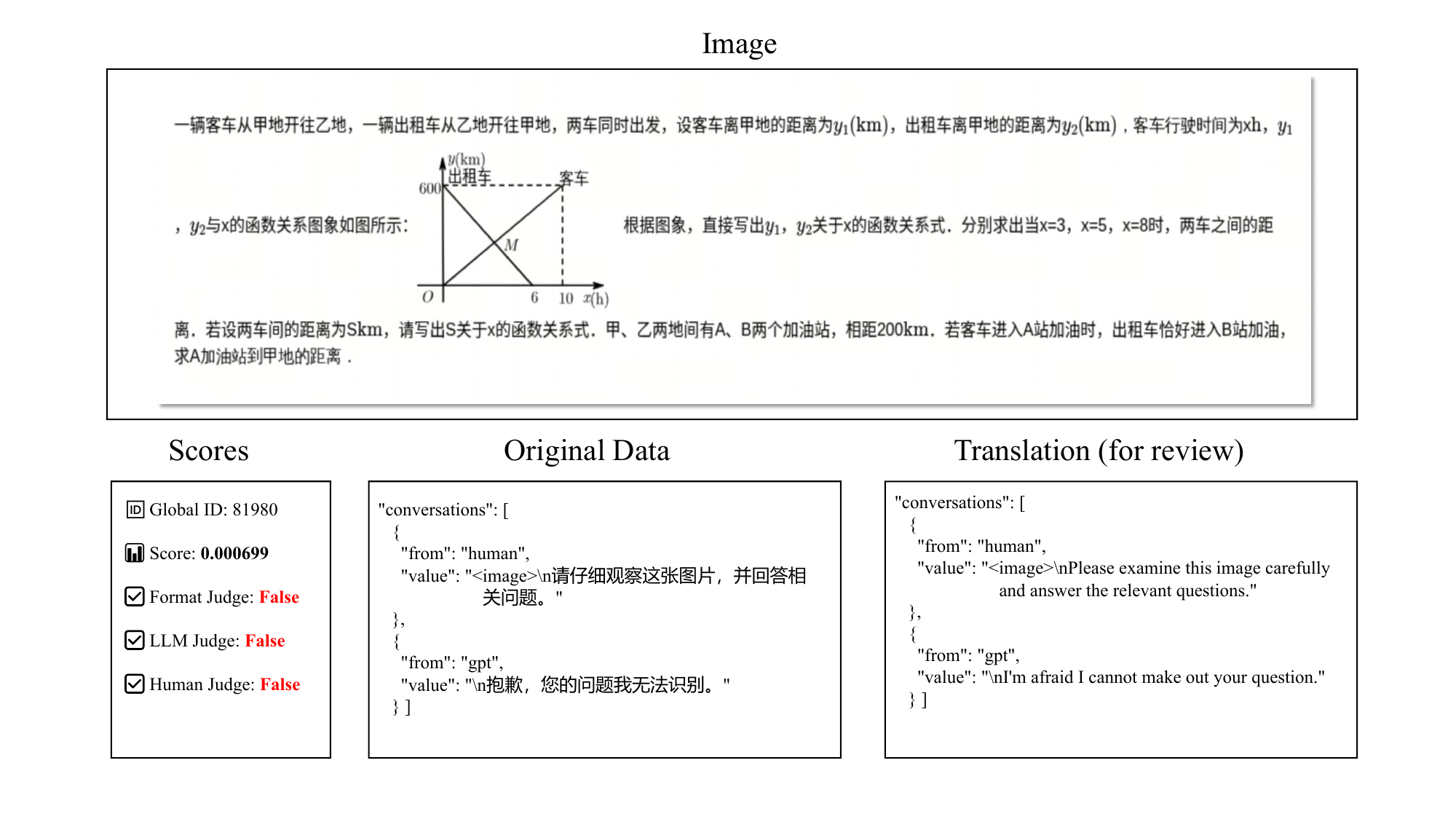} % ID: 81980
        \caption*{\textbf{(a) Bad Sample: Visual Recognition Failure (Refusal)} \\ 
        \textit{Judge: False.} The model fails to extract features from the chart, triggering a safety refusal. Training on such data dilutes instruction-following capabilities.}
    \end{minipage}
    
    \vspace{1.5em} % Spacing between the pair
    
    % --- Positive: Conceptual Grounding ---
    \begin{minipage}{.85\textwidth}
        \centering
        \includegraphics[width=1.0\linewidth]{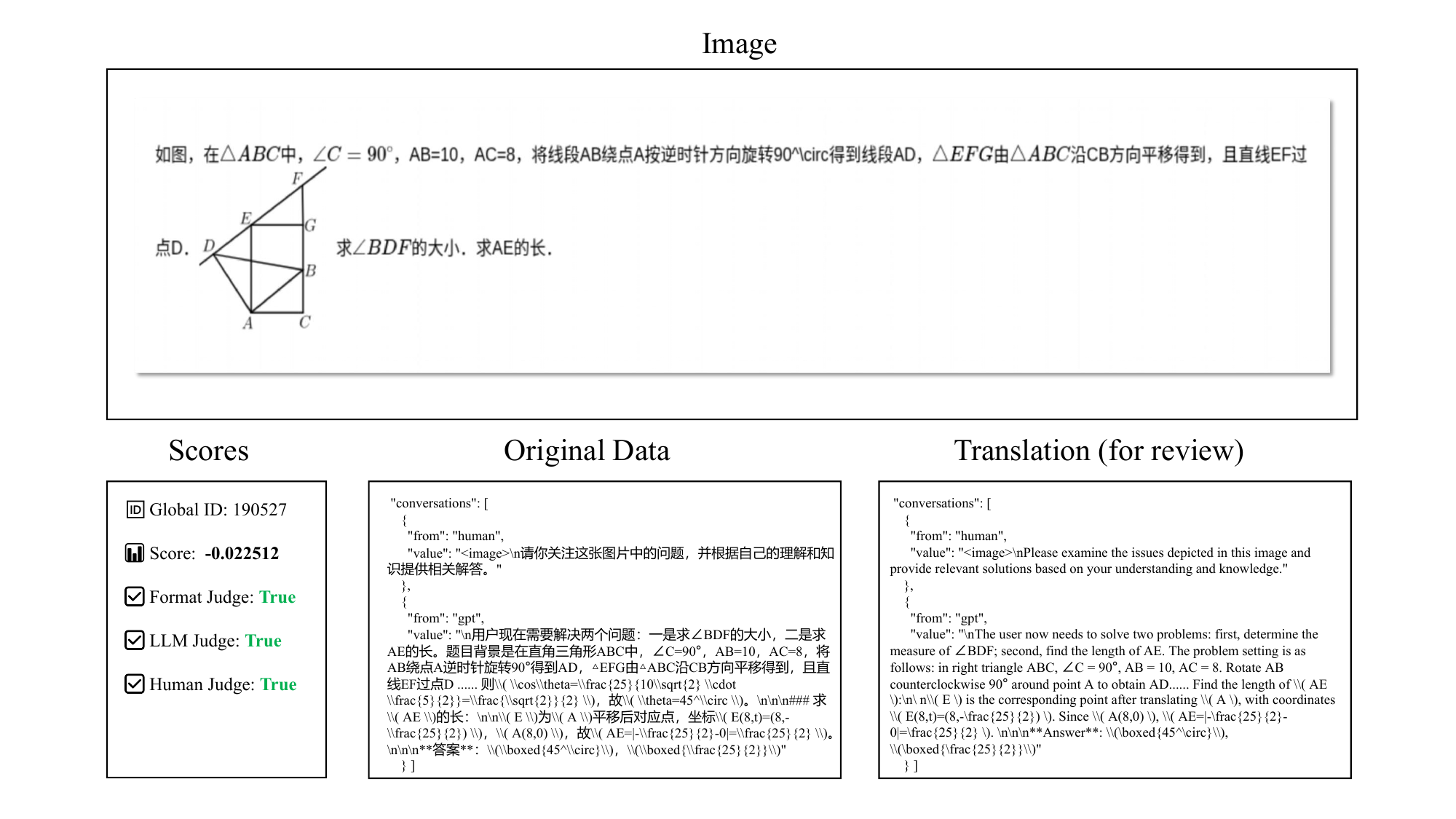} % ID: 236568
        \caption*{\textbf{(b) Good Sample: Conceptual Grounding} \\ 
        \textit{Judge: True.} In contrast, this sample correctly translates visual abstractions (geometric slope) into symbolic equations without hallucination.}
    \end{minipage}
    
    \caption{\textbf{Comparison Group 1: Basic Perception.} While the negative sample (a) fails at the pixel-to-text stage, the positive sample (b) demonstrates successful grounding of abstract visual concepts.}
    \label{fig:pair_recognition}
\end{figure*}

% ==========================================
% Pair 2: Missing Info vs. Spatial Reasoning
% ==========================================
\clearpage
\begin{figure*}[t!]
    \centering
    % --- Negative: Information Loss ---
    \begin{minipage}{.85\textwidth}
        \centering
        \includegraphics[width=1.0\linewidth]{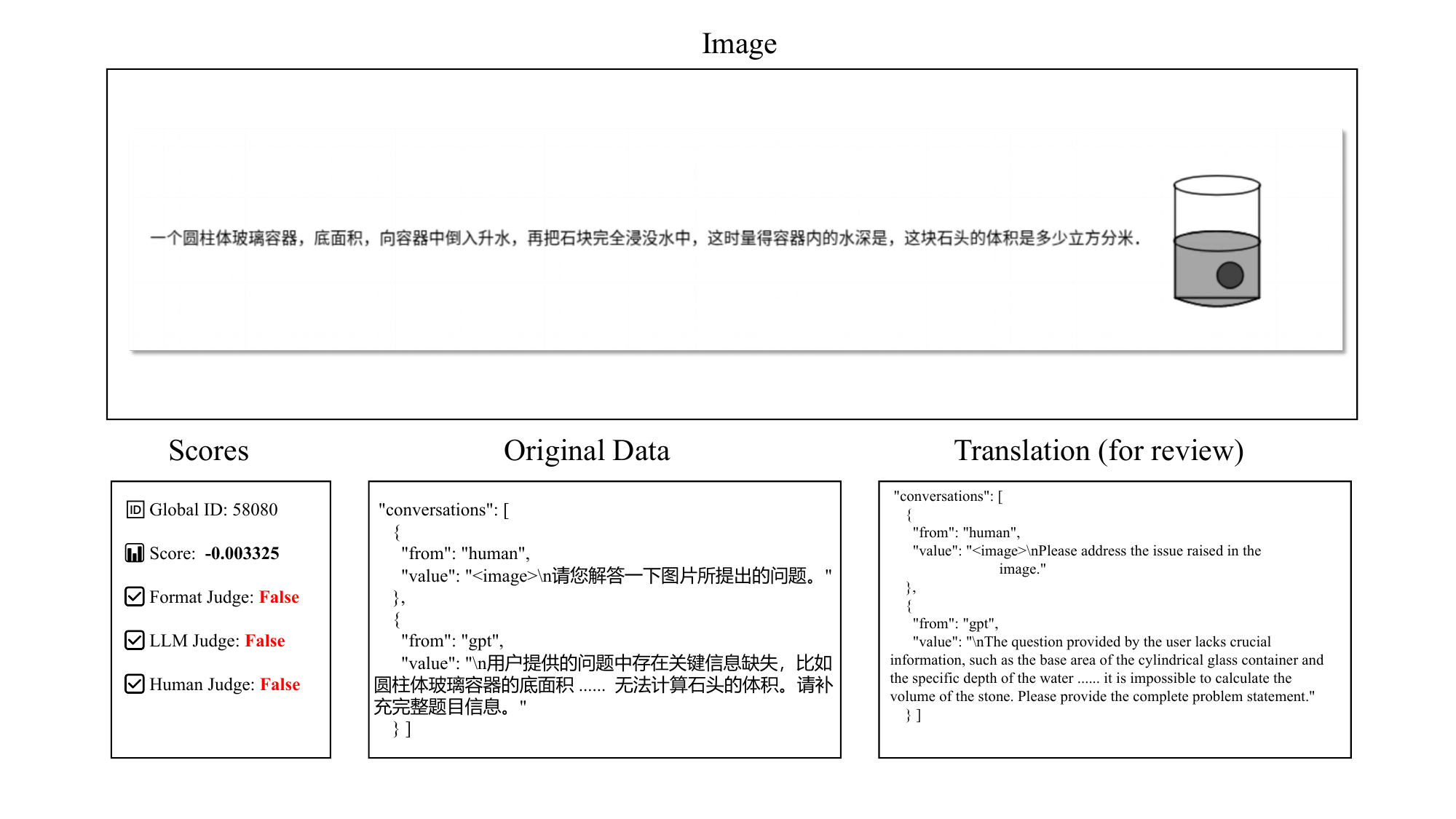} % ID: 58080
        \caption*{\textbf{(c) Bad Sample: Contextual Information Loss} \\ 
        \textit{Judge: False.} Critical conditions (e.g., dimensions) are missing due to cropping/OCR errors. This represents dead-end noise.}
    \end{minipage}
    
    \vspace{1.5em}
    
    % --- Positive: Spatial Reasoning ---
    \begin{minipage}{.85\textwidth}
        \centering
        \includegraphics[width=1.0\linewidth]{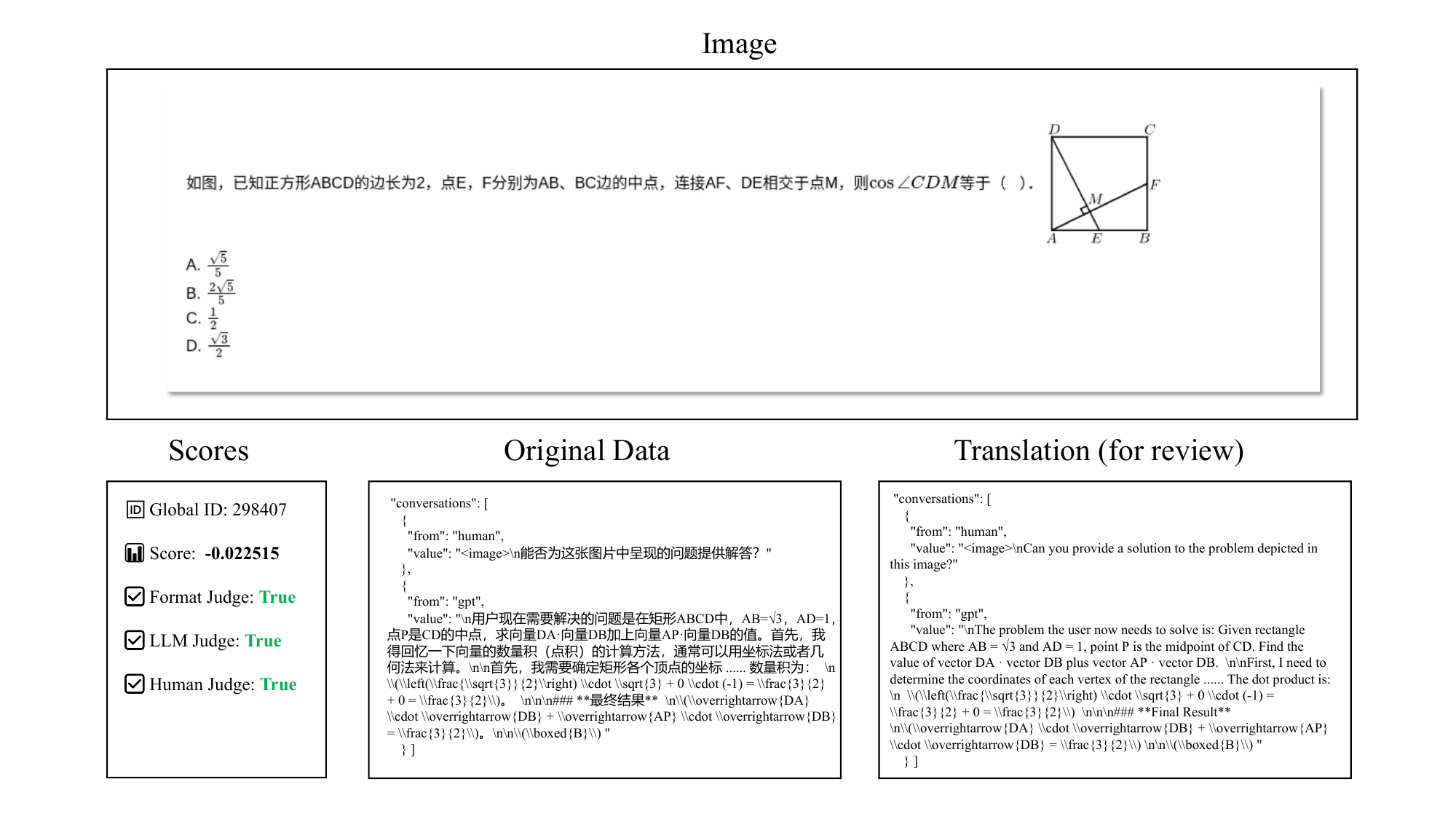} % ID: 190527
        \caption*{\textbf{(d) Good Sample: Spatial Transformation Consistency} \\ 
        \textit{Judge: True.} A high-utility sample requiring multi-step spatial tracking (rotation/translation). The derivation aligns perfectly with visual constraints.}
    \end{minipage}
    
    \caption{\textbf{Comparison Group 2: Context and Reasoning.} Contrast between incomplete data that prevents reasoning (c) and data rich in spatial constraints (d) that provides strong gradient signals.}
    \label{fig:pair_context}
    
\end{figure*}

\begin{figure*}[h]
    \centering
    
    % --- Top Image: Synthesis Prompt ---
    % 将宽度调整为 0.95\linewidth 或 1.0\linewidth 以利用页面宽度
    \includegraphics[width=0.95\linewidth]{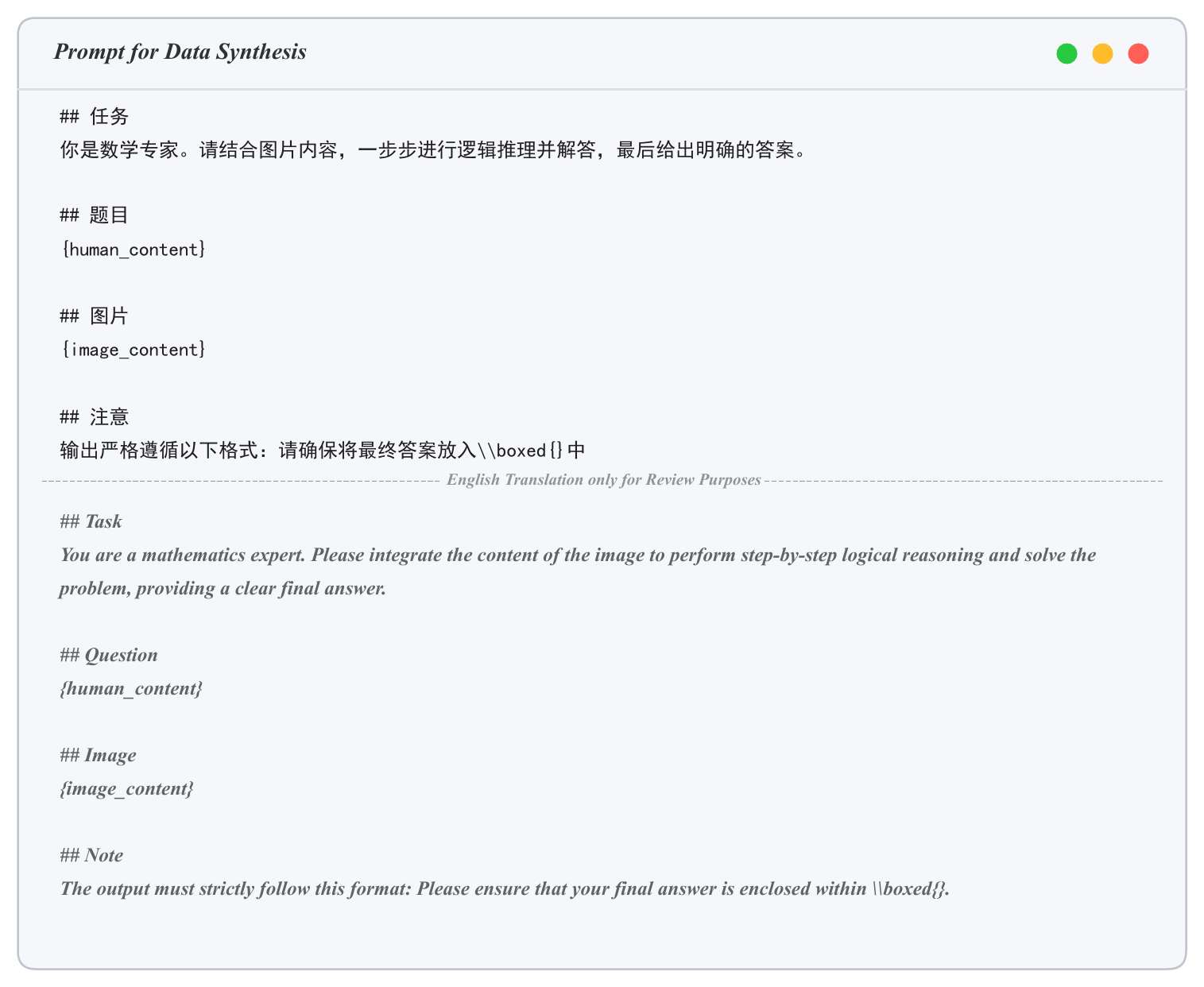}
    \vspace{-0.5em}
    \caption{\textbf{Prompt for CoT Synthesis.} The instruction template used to generate reasoning traces with Doubao-Seed-1.6-thinking during data construction.}
    \label{fig:synthesis_prompt}
\end{figure*}

\begin{figure*}[h]
    % --- Bottom Image: Judge Prompt ---
    \includegraphics[width=0.75\linewidth]{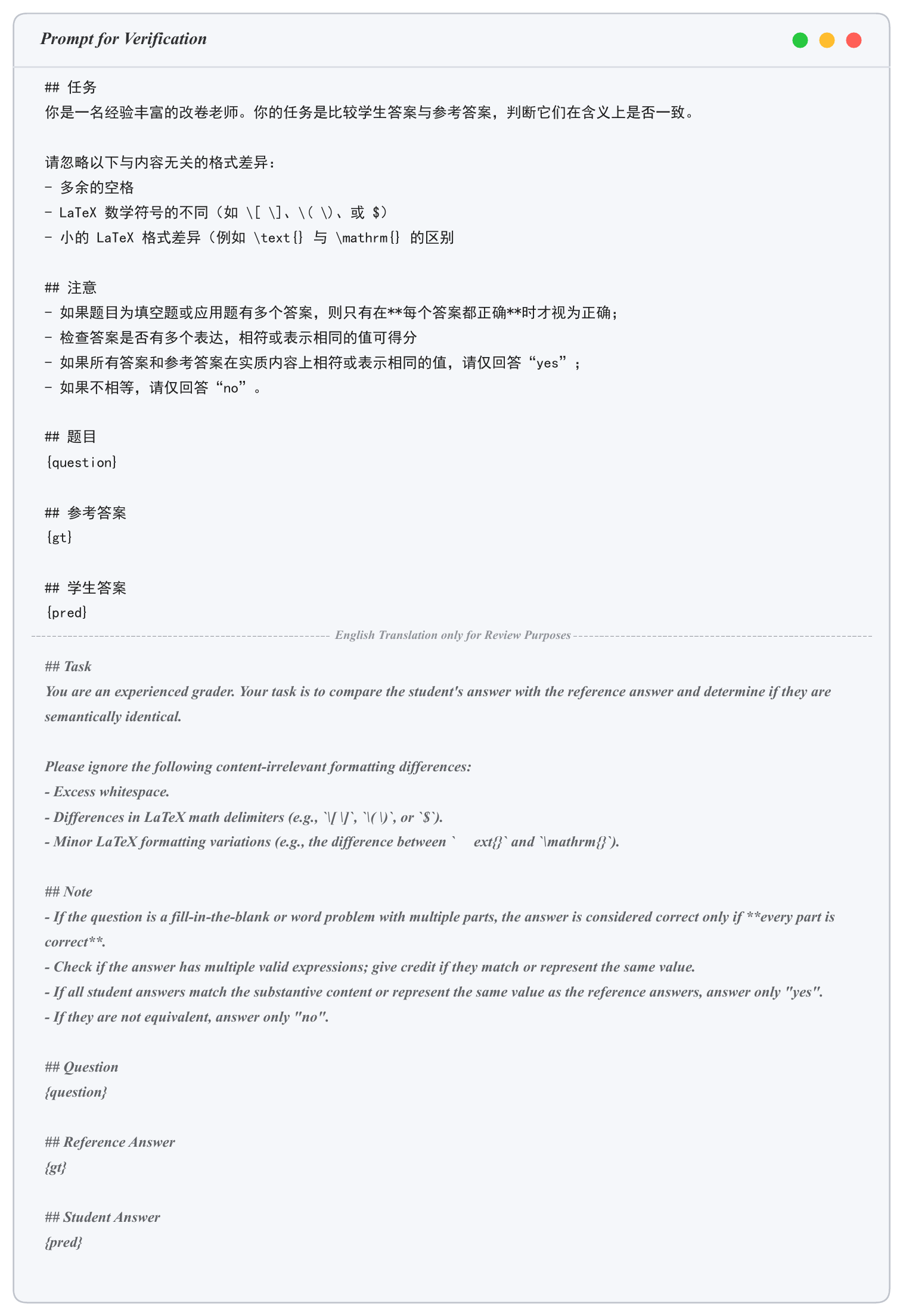}
    \caption{\textbf{Prompt for LLM-as-a-Judge.} The verification instruction template used by Qwen3VL-235B-A22B-Instruct for the baseline rejection sampling.}
    \label{fig:judge_prompt}
    
\end{figure*}

\end{document}